\crefname{equation}{eq.}{eqs.}
\Crefname{equation}{Eq.}{Eqs.}
\Crefname{section}{\S}{\S}
\definecolor{Gray}{gray}{0.9}
\theoremstyle{plain}
\newtheorem{theorem}{Theorem}[section]
\newtheorem{proposition}[theorem]{Proposition}
\newtheorem{lemma}[theorem]{Lemma}
\newtheorem{definition}[theorem]{Definition}
\theoremstyle{remark}
\newtheorem{remark}[theorem]{Remark}
\newcommand{\g}{\,\vert\,}
\newcommand{\s}{\,;\,}
\newcommand{\EE}[2]{\mathbb{E}_{#1}\left[#2\right]}
\newcommand{\kl}[1]{\mathrm{KL}\left(#1\right)}
\newcommand{\cQ}{{\mathcal Q}}
\newcommand{\mbx}{\mathbf{x}}
\newcommand{\mbz}{\mathbf{z}}
\newcommand{\rmF}{\mathrm{F}}
\newcommand{\rmA}{\mathrm{A}}
\title{Amortized Variational Inference: When and Why?}
\author[1]{\href{mailto:<cmargossian@flatironinstitute.org>?Subject=Your UAI 2024 paper}{Charles C. Margossian}{}}
\author[2]{David M. Blei}
\affil[1]{%
    Center for Computational Mathematics \\
    Flatiron Institute \\
    New York, New York, USA
}
\affil[2]{%
    Department of Computer Science and Statistics\\
    Columbia University\\
    New York, New York, USA
}
\begin{document}
\maketitle

\begin{abstract}
  In a probabilistic latent variable model, factorized (or mean-field) variational inference (F-VI) fits a separate parametric distribution for each latent variable.
  Amortized variational inference (A-VI) instead learns a common inference function, which maps each observation to its corresponding latent variable's approximate posterior.
  Typically, A-VI is used as a step in the training of variational autoencoders, however it stands to reason that A-VI could also be used as a general alternative to F-VI.
  In this paper we study when and why A-VI can be used for approximate Bayesian inference.
  We derive conditions on a latent variable model which are necessary, sufficient, and verifiable under which A-VI can attain F-VI's optimal solution, thereby closing the \textit{amortization gap}.
  We prove these conditions are uniquely verified by simple hierarchical models, a broad class that encompasses many models in machine learning.
  We then show, on a broader class of models, how to expand the domain of AVI’s inference function to improve its solution, and we provide examples, e.g. hidden Markov models, where the amortization gap cannot be closed.
\end{abstract}

\section{INTRODUCTION}

A latent variable model is a probabilistic model of observations $\mbx = x_{1:N}$ with corresponding local latent variables $\mbz = z_{1:N}$ and global latent parameters $\theta$.
% Consider for example a model that factorizes as
% %
% \begin{equation}  \label{eq:simple-hier}
%   p(\theta, \mbz, \mbx) = p(\theta) \prod_{n = 1}^N p(x_n \mid z_n, \theta) p(z_n \mid \theta),
% \end{equation}
% %
% where $p(z_n \mid \theta)$ and $p(x_n \mid z_n, \theta)$ have the same distributional forms for all $n$.
% This factorization, termed the \textit{simple hierarchical model} \citep{Agrawal:2021}, underlies many probabilistic models.
% In a deep generative model \citep{Kingma:2014, Tomczak:2022}, 
% %  Should we cite Rezende:2014a too? Removed for space.
% $z_n$ is a low-dimensional latent representation of $x_n$, while $\theta$ parameterizes a neural network that maps $z_n$ to the mean and variance of the likelihood $p(x_n \mid z_n, \theta)$.
% In a multilevel regression model, $z_n$ describes an effect specific to the covariate $x_n$, while $\theta$ encodes effects common to all observations \citep{Gelman:2006}.
%
With a model $p(\theta, \mbz, \mbx)$ and an observed dataset $\mbx$, the central computational problem is to approximate the posterior distribution of the latent variables $p(\theta, \mbz \mid \mbx)$. 

One widely-used method for approximate posterior inference is \textit{variational inference} (VI)~\citep{Jordan:1999, Blei:2017}.
VI sets a parameterized family of distributions $\cQ$ and finds the member of the family that minimizes the Kullback-Leibler (KL) divergence % to the posterior, 
\begin{align}
  \label{eq:vi-optim}
  q^* =
  \arg \min_{q \in \cQ}
  \kl{q(\theta, \mbz) \, || \,
  p(\theta, \mbz \g \mbx)}.
\end{align}
VI then approximates the posterior with the optimized $q^*$. (In practice, VI finds a local optimum of \Cref{eq:vi-optim}.)

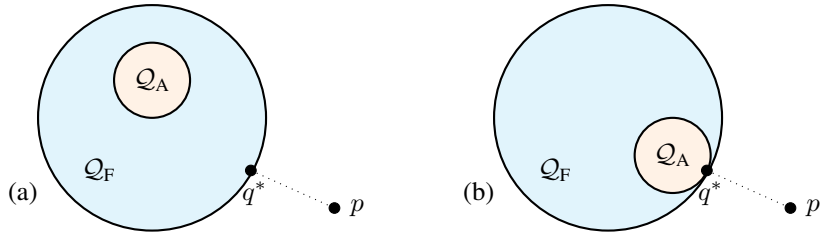
\begin{figure*}
    \centering
    \begin{tikzpicture}
    [
      Empty/.style={circle, draw=cyan!10, fill=cyan!10, thick, minimum size=0mm},
      Round/.style={circle, draw=black!, fill=cyan!10, thick, minimum size=30mm},
      Round_large/.style={circle, draw=black!, fill=orange!10, thick, minimum size=30mm},
      Round_small/.style={circle, draw=black!, fill=orange!10, thick, minimum size=10mm},
      Empty_A/.style={circle, draw=orange!10, fill=orange!10, thick, minimum size=0mm},
    ]

    \node[Round] (QF) at (0, 0) { };
    \draw (-0.7, -0.7) node {$\mathcal Q_\text{F}$};
    \node[Round_small] (QA) at (0, 0.5) { };
    \draw (0, 0.5) node {$\mathcal Q_\text{A}$};
    \filldraw [black] (1.3, -0.7) circle (2pt);
    \filldraw [black] (2.4, -1.2) circle (2pt);
    \draw[black, dotted] (1.3, -0.7) -- (2.4, -1.2);
    \draw (1.35, -1) node {$q^*$};
    \draw (2.7, -1.2) node {$p$};
    \draw (-1.7, -1) node {(a)};

    \node[Round] (QF) at (6, 0) { };
    \node[Round_small] (QA) at (6.85, -0.5) { };
    \draw (5.3, -0.7) node {$\mathcal Q_\text{F}$};
    \draw (6.85, -0.5) node {$\mathcal Q_\text{A}$};
    \filldraw [black] (7.3, -0.7) circle (2pt);
    \filldraw [black] (8.4, -1.2) circle (2pt);
    \draw[black, dotted] (7.3, -0.7) -- (8.4, -1.2);
    \draw (7.35, -1) node {$q^*$};
    \draw (8.7, -1.2) node {$p$};  
    \draw (4.3, -1) node {(b)};

    \end{tikzpicture}
    \caption{\textit{The variational family $\mathcal Q_\text{A}$ for A-VI is a subset of the variational family $\mathcal Q_\text{F}$ for F-VI. (a) In general, F-VI can achieve a lower KL-divergence than A-VI.
    (b) Under certain conditions, however A-VI may still achieve the same optimal solution $q^*$ as F-VI.
    }}
    \label{fig:ordering}
\end{figure*}

To fully specify the VI objective of \Cref{eq:vi-optim}, we must decide on the variational family $\cQ$ over which to optimize. 
Many applications of VI use the \textit{fully factorized family}, also known as the \textit{mean-field family}.
It is the set of distributions where each variable is independent,
\begin{align}
  \label{eq:Q_F}
  \cQ_{\rmF} =
  \left\{
  q: q(\theta, \mbz) = q_0(\theta) \textstyle \prod_{n = 1}^N q_n(z_n)
  \right\},
\end{align}
and where the notation $q_n$ clarifies that there is a separate factor for each latent variable.
The factorized family underpins many applications where fast computation is desired to fit high-dimensional models to large data sets \citep[e.g][]{Bishop:2002, Blei:2012, Giordano:2023}.
We call an algorithm that optimizes \Cref{eq:vi-optim} over $\cQ_{\rmF}$ \textit{factorized variational inference} (F-VI). 

While the factorized family involves a separate factor $q_n(z_n)$ for each latent variable, recent applications of VI have explored the \textit{amortized variational family} \citep[e.g][]{Kingma:2014, Tomczak:2022}. In this family, the latent variables are again independent. But now the variational distribution of $z_n$ is governed by an \textit{inference function} $f_{\phi}(x_n)$, 
\begin{align}
  \label{eq:Q_A}
  \cQ_{\rmA} =
  \left\{
  q: q(\theta, \mbz) = q_0(\theta)
  \textstyle \prod_{n = 1}^N q(z_n \s f_{\phi}(x_n)) 
  % ; \ f_\phi \in \mathcal F
  \right\}.
\end{align}
The inference function $f_\phi$ maps each $x_n$ to the parameters of its corresponding latent variable's approximating factor $q_n(z_n)$.
Optimizing \Cref{eq:vi-optim} now amounts to fitting an approximate posterior $q_0(\theta)$ and the inference function $f_{\phi}$. Such an algorithm is called \textit{amortized variational inference} (A-VI).

The canonical application of amortization is in the \textit{variational autoencoder} (VAE)~\citep{Kingma:2014, Rezende:2014a}, where A-VI is used to do variational expectation-maximization.
In this application, $p(\mbx \mid \theta, \mbz)$ is specified by a deep generative model.
The inference function, termed the ``encoder'', is used to approximate the conditional posterior $p(z_n \mid x_n, \theta)$ for the expectation step.
We then estimate $\theta$ by maximizing the approximated marginal likelihood $p(\mbx \mid \theta)$, which is the maximization step.

There exist several motivations for A-VI.
One of them is scaling.
While F-VI requires fitting a separate variational factor for each of the data points, A-VI can be more efficient since what we learn about $\phi$ can be amortized across data points.
However, if A-VI’s inference function is not sufficiently expressive, it may fail to produce as sophisticated a solution as F-VI. 
We will formalize this intuition and go a little beyond, showing that no matter how expressive the inference function, $\mathcal Q_\text{A}$ is always a poorer family than $\mathcal Q_\text{F}$.

While A-VI is typically understood as a cog in the VAE, its formulation suggests a more general algorithm for approximate posterior inference.
In this paper, we study A-VI as a general-purpose alternative to F-VI. We ask: Under what conditions can A-VI achieve the same solution as F-VI?

In more detail, because $\mathcal Q_\text{A}(\mathcal F)$ is a poorer family than $\mathcal Q_\text{F}$, A-VI cannot achieve a lower KL-divergence than F-VI's optimal approximation. So, our goal is to distinguish the two scenarios illustrated in Figure~\ref{fig:ordering}. In one, the amortized family contains the optimal factorized variational distribution; in the other, the amortized family does not contain it. In the VAE literature, A-VI's potential suboptimality relative to F-VI is known as the \textit{amortization gap} 
\citep{Cremer:2018}. 

First, we characterize the class of models where A-VI can close the amortization gap and show that this class corresponds to \textit{simple hierarchical models} \citep{Agrawal:2021}, i.e. latent variable models which factorize as:
\begin{equation} \label{eq:simple-hier}
  p(\theta, \mbz, \mbx) = p(\theta) \prod_{n=1}^N p(z_n) p(x_n \mid z_n, \theta).
\end{equation}
This class includes the deep generative model that underlies the VAE and many other models in machine learning and in Bayesian statistics. Our analysis also shows that A-VI is appropriate for \textit{full Bayesian inference}, meaning we approximate $p(\theta, \mbz \g \mbx)$, rather than approximate $p(\mbz \mid \mbx, \theta)$ and point-estimate $\theta$ as in variational expectation-maximization.

Second, we generalize A-VI by expanding the domain of the inference function beyond a single data point $x_n$. We establish verifiable conditions for when an expanded function can close the amortization gap, and we provide a time-series example. Finally, we show that there are important examples, such as the hidden Markov model and the Gaussian process, where A-VI cannot attain F-VI's optimal solution, even if expanding the domain of the inference function.

% Our analysis also shows that A-VI can close the gap when approximating the joint posterior distribution $p(\theta, \mbz \mid \mbx)$ rather than the conditional posterior $p(\mbz \mid \theta, \mbx)$.
% A-VI is therefore viable for \textit{full Bayesian inference} \citep{Gelman:2013}, meaning that we construct a posterior over $\theta$, rather than produce a point estimate $\hat \theta$ as we would when doing variational expectation maximization.
% Therefore A-VI can be used to fit Bayesian hierarchical models and train Bayesian neural networks.

%
% Our main question is when can A-VI achieve the optimize F-VI --- this is known as the amortization gap.
%
% We characterize the class of models where A-VI can close the amortization gap.
%
% Our theory also shows that A-VI for full Bayesian inference

% We are interested in two generalizations from the VAE application: (i) Bayesian inference on both $\mbz$ and $\theta$, through the approximation of the joint posterior $p(\theta, \mbz \mid \mbx)$, thereby enabling the fitting Bayesian hierarchical models and the training of Bayesian neural networks; and (ii) latent variable models beyond the factorization of \Cref{eq:simple-hier}, including saw time series, hidden Markov models and Gaussian processes. 

\paragraph{Plan.} In \Cref{sec:ordering} we show that the potential for A-VI to achieve F-VI's solution amounts to implicitly solving an \textit{amortization interpolation problem} between $x_n$ and the optimal variational factors of F-VI.   For a solution to exist, two conditions must be met: (i) the interpolation problem must be well-posed, which is a condition on the model $p(\theta, \mbz, \mbx)$; and (ii) the class of inference functions over which we learn $f_\phi$ must be sufficiently expressive, which is a condition on the inference algorithm. 

In \Cref{sec:existence}, we investigate condition (i) theoretically. We show that, in general, the amortization interpolation problem admits a solution if and only if $p(\theta, \mbz, \mbx)$ is a simple hierarchical model. We then show how to expand the inference function to accommodate more models, and that there are models for which the gap cannot be closed.  

In \Cref{sec:experiment}, we empirically study condition (ii).
We find that the number of parameters of the inference function does not need to scale with $N$ for A-VI to achieve F-VI's solution, whereas the number of parameters for F-VI must scale with $N$.
We demonstrate this phenomenon across several models, including a Bayesian neural network. 
We also find that when the class of inference functions is sufficiently expressive, A-VI often converges faster than F-VI to the optimal solution. However, in some problems, the performance of A-VI may be much more sensitive to the random seed than F-VI.

\paragraph{Related work.} The amortization gap has been extensively studied in the context of VAEs \citep{Hjelm:2016, Cremer:2018, Kim:2018, Marino:2018, Krishnan:2018, Kim:2021}. This paper goes beyond the VAE, seeking to understand when and how the amortization gap closes for latent variable models in general.
The accuracy of A-VI has also been studied for calculations on held-out likelihoods \citep{Shu:2018}. That said, our focus here is on using A-VI for posterior inference, rather than predictive distributions.

% We also consider doing full Bayesian inference on both the local variable $z_{1:N}$ and the global variable $\theta$.
% Existing work on VAEs typically uses a point estimator $\hat \theta$ which maximizes the (approximated) marginal likelihood $p(\mbx \mid \theta)$ \citep[e.g][Chapter 4.3]{Tomczak:2022}.
% In our theoretical analysis on VI's optimum, we can recover the point estimation setup by restricting the variational factor $q_0(\theta)$ to assign all its probability mass to $\hat \theta$.
% In this sense, the full Bayesian analysis is more general.

There has been some interest in applying A-VI to models other than standard VAEs \citep{Gershman:2014}, including dynamic VAEs \citep{Girin:2021}, latent Dirichlet allocation models \citep{Srivastava:2017}, and Bayesian hierarchical models \citep{Agrawal:2021}.
In this paper, we study latent variable models in general, rather than focus on a specific model.

% db : below is our discussion at the marshall chess club.

In some applications of A-VI, researchers have expanded the domain of the inference function beyond a single data point.
The conventional wisdom is that the inference function should take as input the same data that the exact posterior of the local variable $z_n$ depends on \citep[e.g][chapter 4]{Girin:2021}.
When doing full Bayesian inference, each latent variable $z_n$ typically depends on the entire data set $\mbx$, however we argue that it can be sufficient to only pass $x_n$ to $f_\phi$.
Hence the amortization interpolation problem provides a weaker condition than \textit{a posteriori} dependence on when the amortization gap can be closed.

In addition to passing more data points, it is also possible to pass latent variables to $f_\phi$, notably in hierarchical models \citep[e.g][]{Webb:2018, Agrawal:2021, Girin:2021}.
This strategy changes the factorization of $q(\theta, \mbz)$ and is aimed at closing the inference gap, i.e. further reducing $\text{KL}(q||p)$ towards 0 or equivalently increasing the evidence lower bound (ELBO), rather than the amortization gap.
This type of A-VI is beyond the scope of our paper, though extension of our analysis to such inference functions is feasible.

% \section{ORDERING OF THE VARIATIONAL FAMILIES}
\section{PRELIMINARIES}
\label{sec:ordering}

We first set up some theoretical facts about A-VI and F-VI, and articulate the conditions under which the A-VI solution is as accurate as the F-VI solution.
We assume that both variational families (\Cref{eq:Q_F,eq:Q_A}) use the same type of distribution for $q_0(\theta)$ and so we focus on the variational distributions of $z_n$.
For each local latent variable $z_n$, F-VI assigns a marginal distribution $q_n(z_n \s \nu_n)$ from a parametric family $\mathcal Q_\ell$ with parameter $\nu_n \in \mathcal U$, where $\mathcal U$ denotes the space of valid parameters for the variational distribution $\mathcal Q_\ell$.
The joint family $\mathcal Q_\text{F}$ is then defined as the product of marginals $q_0(\theta \s \nu_0) \prod_{n=1}^{N} q_n(z_n \s \nu_n)$.
Minimizing the KL-divergence of \Cref{eq:vi-optim} yields the optimal variational parameters $\nu^* = (\nu^*_0, \nu^*_1, \cdots, \nu^*_N)$.

Let $\mathcal X$ be the space of $x_n$.
A-VI fits a function $f_\phi: \mathcal X \to \mathcal U$ over a family of inference functions $\mathcal F$ parameterized by $\phi$ and the KL-divergence of \Cref{eq:vi-optim} is minimized with respect to $\phi$.
We denote the resulting variational family $\mathcal Q_\text{A}(\mathcal F)$.

% For example, suppose $\mathcal Q_\ell$ is a univariate Gaussian.
% Then we can use a neural network as our inference function which takes in $x_n$ and returns the mean and variance of the approximating Gaussian for $z_n$.
% $\mathcal Q_\ell$ may be any parametric distribution, and the inference function any estimable function, including polynomials, networks, and more.

% The first proposition states that the armotization gap is always positive.
% the optimal variational distribution found by A-VI can be no better than the optimal variational distribution found by F-VI. %, i.e. the amortization gap is always positive.
%
\begin{proposition}
  \label{prop:ordering}
  For any class of inference functions $\mathcal F$, $\mathcal Q_A (\mathcal F)$ is a strict subset of $\mathcal Q_F$.  
  %Hence A-VI cannot
  % achieve a lower KL-divergence than F-VI.
\end{proposition}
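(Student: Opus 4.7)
The plan is to establish two things: (i) the inclusion $\mathcal{Q}_A(\mathcal{F}) \subseteq \mathcal{Q}_F$, and (ii) that the inclusion is proper.

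For (i), I would take an arbitrary $q \in \mathcal{Q}_A(\mathcal{F})$ associated with some $f_\phi \in \mathcal{F}$ and exhibit it as a member of $\mathcal{Q}_F$ by the trivial identification $\nu_n := f_\phi(x_n) \in \mathcal{U}$ for each $n = 1, \ldots, N$ (and keeping the same $q_0(\theta\s\nu_0)$). Since each $f_\phi(x_n)$ lies in $\mathcal{U}$ and $\mathcal{Q}_F$ ranges over arbitrary tuples $(\nu_0, \nu_1, \ldots, \nu_N) \in \mathcal{U}^{N+1}$, the two densities agree factor by factor. So inclusion is straightforward.

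For (ii), I would construct a member of $\mathcal{Q}_F$ that cannot be matched by any inference function in $\mathcal{F}$. The key structural constraint of A-VI is that the local variational parameters are tied through a single function of $x_n$, whereas F-VI treats them as free. Two clean routes to strictness are: first, whenever two observations happen to coincide, $x_i = x_j$, any $q \in \mathcal{Q}_A(\mathcal{F})$ is forced to satisfy $\nu_i = \nu_j$, while $\mathcal{Q}_F$ freely admits $\nu_i \ne \nu_j$; second, in the standard setting where $\mathcal{F}$ is a finite-dimensional parametric family, the dimension of $\mathcal{Q}_A(\mathcal{F})$ is $\dim(\phi) + \dim(\nu_0)$, which is independent of $N$, whereas $\dim(\mathcal{Q}_F) = N\dim(\mathcal{U}) + \dim(\nu_0)$ grows linearly in $N$. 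For $N$ large enough the latter strictly exceeds the former, so $\mathcal{Q}_A(\mathcal{F})$ must miss members of $\mathcal{Q}_F$.

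The main obstacle is controlling the strictness claim when $\mathcal{F}$ is taken to be arbitrarily rich (for example, all measurable functions $\mathcal{X} \to \mathcal{U}$), because with distinct $x_n$ one could interpolate any assignment of $\nu_n$'s and the inclusion would become an equality. I would therefore either (a) make explicit at the start that $\mathcal{F}$ is parametrized by a finite-dimensional $\phi$, the setting used throughout the paper, and conclude strictness by the dimension count, or (b) state strictness for any non-degenerate data set, using the duplicate-point argument as the witness. Either route keeps the proof short, with the inclusion step being purely formal and the strictness step being the only substantive content.
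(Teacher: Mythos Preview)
Your proposal is correct, and your route (a)---the duplicate-point argument---is exactly the paper's proof: the paper simply notes that if $x_n = x_m$ then any $f_\phi$ forces $f_\phi(x_n) = f_\phi(x_m)$, whereas $\mathcal{Q}_F$ contains distributions with $\nu_n \neq \nu_m$, and this witness settles strictness. You go a bit further than the paper by flagging the edge case (all $x_n$ distinct, $\mathcal{F}$ arbitrarily rich) in which the duplicate argument is unavailable, and by offering the dimension-count alternative for finite-dimensional $\mathcal{F}$; the paper does not discuss either point and takes the duplicate case as sufficient for the stated proposition.
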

\begin{proof}
It is straightforward to see from \Cref{eq:Q_F} and \Cref{eq:Q_A} that $\mathcal Q_\text{A}(\mathcal F)$ is a subset of $\mathcal Q_\text{F}$.
% Any distribution in $\mathcal Q_A(\mathcal F)$ can be found in $\mathcal Q_F$ by (i) matching $q_0(\theta)$, and (ii) matching the variational parameters $\nu_n$ to $f_\phi(x_n)$.
% Hence $\mathcal Q_\text{A}(\mathcal F) \subseteq \mathcal Q_\text{F}$.

To make the ordering strict, it suffices to find an element in $\mathcal Q_\text{F}$ which does not belong to $\mathcal Q_\text{A}$.
Note this element need not be a minimizer of $\text{KL}(q || p)$.
Consider a case where two data points are equal, $x_n = x_m$.
Then there exists a distribution $\tilde q(\theta, \mbz) \in \mathcal Q_\text{F}$ such that $\nu_n \neq \nu_m$, however, we necessarily have $f_\phi(x_n) = f_\phi(x_m)$, and so $\tilde q(\theta, \mbz) \notin \mathcal Q_\text{A}(\mathcal F)$.
\end{proof}

An immediate consequence of the ordering is that A-VI cannot achieve a lower KL-divergence than F-VI, leading to a potential amortization gap.
To close the gap, the inference function $f_\phi$ must interpolate between $x_n$ and the optimal variational parameter $\nu^*_n$,
\begin{equation}~\label{eq:interpolation}
    f_\phi(x_n) = \nu^*_n, \ \ \forall n.
\end{equation}
We call the problem of finding an $f$ that solves \Cref{eq:interpolation} the \textit{amortization interpolation problem}.

\begin{definition}
  Given a data set $\mbx$, suppose $f: \mathcal X \to \mathcal U$ solves \Cref{eq:interpolation}.
  Then we say $f$ is an ideal inference function.
\end{definition}

The strict ordering between $\mathcal Q_\text{A}(\mathcal F)$ and $\mathcal Q_\text{F}$ warns us that the amortization interpolation problem may not be well posed, since we may find ourselves in a setting where $x_n = x_m$ but $\nu^*_n \neq \nu^*_m$, in which case no ideal inference function exists.
In the next section, we derive conditions on the model $p(\theta, \mbz, \mbx)$ that guarantee the existence of an ideal inference function.
Once we establish that the amortization problem is well posed, we can ask how rich does $\mathcal F$ need to be to include an ideal inference function.
We will investigate this question empirically in \Cref{sec:experiment}.

\section{EXISTENCE OF AN IDEAL INFERENCE FUNCTION}
\label{sec:existence}

% A first step in understanding whether A-VI should be applied to a model is to establish the existence of an ideal inference function.
% That is we need to check that $x_n = x_m$ implies $\nu^*_n = \nu^*_m$.
%  and moreover that $x_n$ provides all the information we need to distinguish the different factors of $q(z_n \s \nu_n^*)$.
% We will see that when an ideal inference function exists, the optimal variational factor takes the form
% \begin{equation} \label{eq:learnable-density}
%     q(z_n \s \nu^*_n) \propto \int_\Theta g(\theta, {\bf x}) (m(\theta, z_n) + h(\theta, z_n, x_n)) \ \text d \theta.
% \end{equation}
% %
% Through $g$, the ideal inference function depends on the entire dataset.
% However, since $g$ is common to all factors, \mbox{$x_n = x_m \implies q(z_n \s \nu^*_n) = q(z_m \s \nu^*_m)$} and so an ideal inference function exists.

We first present a lemma that characterizes the optimal variational parameters of F-VI for any model. 
\begin{lemma} \label{lemma:cavi} (CAVI rule)
  Consider a probabilistic model $p(\theta, \mbz, \mbx)$.
  The optimal solution for F-VI verifies,
  \begin{align}
    q(z_n \s \nu^*_n)
    \propto
    \exp \left \{
    \EE{q(\theta \s \nu_0^*)}{
      \EE{q(\mbz_{-n} \s \nu^*)}{
        \log p(\theta, \mbz, \mbx)}} \right \},
  \end{align}
   where $\mathbb{E}_{q(\mbz_{-n} \s \nu^*)}$ is the expectation with respect to all $z_j$'s except $z_n$.
\end{lemma}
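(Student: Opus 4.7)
The plan is to derive the CAVI update by isolating the ELBO's dependence on the single factor $q(z_n \s \nu_n)$ while holding all other factors fixed, and then recognizing the resulting objective as a negative KL divergence between $q(z_n \s \nu_n)$ and the candidate density on the right-hand side. Because minimizing $\mathrm{KL}(q \,\|\, p(\theta, \mbz \g \mbx))$ over $q \in \cQ_\rmF$ is equivalent to maximizing the ELBO
\[
\mathcal L(q) = \EE{q}{\log p(\theta, \mbz, \mbx)} - \EE{q}{\log q(\theta, \mbz)},
\]
and $\nu^*$ is a stationary point of this objective, I can examine the induced subproblem in $\nu_n$ with $(\nu_0^*, \nu^*_{-n})$ held fixed.

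First, I would expand $\log q(\theta, \mbz) = \log q_0(\theta) + \sum_{j=1}^N \log q_j(z_j \s \nu_j)$ using the factorization in \Cref{eq:Q_F}, so that
\[
\mathcal L(q) = \EE{q_n}{\,\EE{q(\theta \s \nu_0^*) q(\mbz_{-n} \s \nu^*)}{\log p(\theta, \mbz, \mbx)}\,} - \EE{q_n}{\log q(z_n \s \nu_n)} + C,
\]
where $C$ collects all terms independent of $\nu_n$. Next, I would define the (unnormalized) density
\[
\tilde q(z_n) \;\propto\; \exp\!\left\{\EE{q(\theta \s \nu_0^*)}{\EE{q(\mbz_{-n} \s \nu^*)}{\log p(\theta, \mbz, \mbx)}}\right\}
\]
and rewrite the ELBO subproblem as
\[
\mathcal L(q) = -\kl{q(z_n \s \nu_n) \,\|\, \tilde q(z_n)} + C',
\]
for a new $\nu_n$-independent constant $C'$. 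Since KL is non-negative and vanishes iff the two distributions agree, and since $\nu_n^*$ maximizes this subproblem by assumption, we conclude $q(z_n \s \nu_n^*) = \tilde q(z_n)$, which is exactly the claim.

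The only subtle point is that $\cQ_\ell$ is a parametric family, so strictly speaking the coordinate-wise maximizer $q(z_n \s \nu_n^*)$ equals $\tilde q(z_n)$ only up to projection onto $\cQ_\ell$; the lemma, as stated, uses "$\propto$" in the sense that the exponentiated-expectation form characterizes the target the CAVI update aims at, which coincides with the optimum when the family is rich enough (e.g., when $p$ is conditionally conjugate so that $\tilde q \in \cQ_\ell$). I would therefore either note this standard caveat explicitly, or frame the lemma as identifying the unique fixed point of the CAVI recursion, with the equality understood as equality of normalized densities inside $\cQ_\ell$. The main potential obstacle is purely bookkeeping: tracking which normalizing factors get absorbed into $C$ and $C'$, and making sure Fubini applies so that the double expectation can be pulled through the logarithm—both of which are routine under mild integrability that is implicitly assumed whenever the ELBO is well defined.
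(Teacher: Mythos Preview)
Your proposal is correct and follows essentially the same approach as the paper. The paper's own proof simply cites the standard coordinate-ascent update rule \citep[Eq.~17]{Blei:2017} and then evaluates it at the optimum $\nu^*$, whereas you re-derive that update in a self-contained way via the ELBO/KL decomposition before applying it at $\nu^*$; the underlying logic is the same, and your caveat about the parametric family $\cQ_\ell$ versus the unconstrained optimizer $\tilde q$ is a fair observation that the paper leaves implicit.
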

\begin{proof}
  See Appendix~A.1.
\end{proof}
The proof follows from applying the coordinate ascent VI update rule \citep[Eq. 17]{Blei:2017} at the optimal solution $\nu^*$.
Note that the optimal variational parameters depend on the data and so, where helpful, we write $\nu^* = \nu^*({\bf x})$.

\begin{figure*}
      \centering
      \begin{tikzpicture}
    [
      Empty/.style={circle, draw=white!, fill=green!0, thick, minimum size=1mm},
      Round/.style={circle, draw=black!, fill=green!0, thick, minimum size=10mm},
    ]

    % Simple hierarchical
    \node[Empty] (g1) at (1, 1){Simple hierarchical};
    \node[Round] (z_1) at (0, 0){$z_{n - 1}$};
    \node[Round] (z_2) at (2, 0){$z_n$};
    \node[Round] (x_1) at (0, -1.5){$x_{n - 1}$};
    \node[Round] (x_2) at (2, -1.5){$x_n$};
    \node[Round] (theta) at (1, -3) {$\theta$};

    \path[->, draw] (z_1) -- (x_1);
    \path[->, draw] (z_2) -- (x_2);
    \path[->, draw] (theta) -- (x_1);
    \path[->, draw] (theta) -- (x_2);

    % Saw time-series
    \node[Empty] (g2) at (4.5, 1){Saw time series};
    \node[Round] (2z_1) at (3.5, 0){$z_{n - 1}$};
    \node[Round] (2z_2) at (5.5, 0){$z_n$};
    \node[Round] (2x_1) at (3.5, -1.5){$x_{n - 1}$};
    \node[Round] (2x_2) at (5.5, -1.5){$x_n$};
    \node[Round] (2theta) at (4.5, -3) {$\theta$};

    \path[->, draw] (2z_1) -- (2x_1);
    \path[->, draw] (2z_2) -- (2x_2);
    \path[->, draw] (2x_1) -- (2z_2);
    \path[->, draw] (2theta) -- (2x_1);
    \path[->, draw] (2theta) -- (2x_2);

    % Hidden Markov model
    \node[Empty] (g3) at (8, 1){Hidden Markov model};
    \node[Round] (3z_1) at (7, 0){$z_{n - 1}$};
    \node[Round] (3z_2) at (9, 0){$z_n$};
    \node[Round] (3x_1) at (7, -1.5){$x_{n - 1}$};
    \node[Round] (3x_2) at (9, -1.5){$x_n$};
    \node[Round] (3theta) at (8, -3) {$\theta$};

    \path[->, draw] (3z_1) -- (3x_1);
    \path[->, draw] (3z_2) -- (3x_2);
    \path[->, draw] (3z_1) -- (3z_2);
    \path[->, draw] (3theta) -- (3x_1);
    \path[->, draw] (3theta) -- (3x_2);

    % Dense hierarchical
    \node[Empty] (g4) at (11.5, 1){Dense hierarchical};
    \node[Round] (4z_1) at (10.5, 0){$z_{n - 1}$};
    \node[Round] (4z_2) at (12.5, 0){$z_n$};
    \node[Round] (4x_1) at (10.5, -1.5){$x_{n - 1}$};
    \node[Round] (4x_2) at (12.5, -1.5){$x_n$};
    \node[Round] (4theta) at (11.5, -3) {$\theta$};

    % \path[->, draw, dotted] (4z_1) -- (4z_2);
    \path[->, draw] (4z_1) -- (4x_1);
    \path[->, draw] (4z_2) -- (4x_2);
    \path[->, draw] (4z_1) -- (4x_2);
    \path[->, draw] (4z_2) -- (4x_1);
    \path[->, draw] (4theta) -- (4x_1);
    \path[->, draw] (4theta) -- (4x_2);

    \end{tikzpicture}
    \caption{
    \textit{For the simple hierarchical model (\Cref{eq:simple-hier}), an ideal inference function $f_{\bf x}$ such that $f_{\bf x}(x_n) = q(z_n \s \nu_n^*)$ exists.
    The saw time-series requires learning a map with two inputs $(x_{n -1}, x_n)$.
    For the Hidden Markov and dense hierarchical graphs, there is no ideal inference function.
    In the dense hierarchical model, there is an edge between every element of $\mbz$ and every element of $\mbx$.
    For clarity we removed edges between $\theta$ and $z_n$ in all graphs.
    }}
    \label{fig:models}
  \end{figure*}
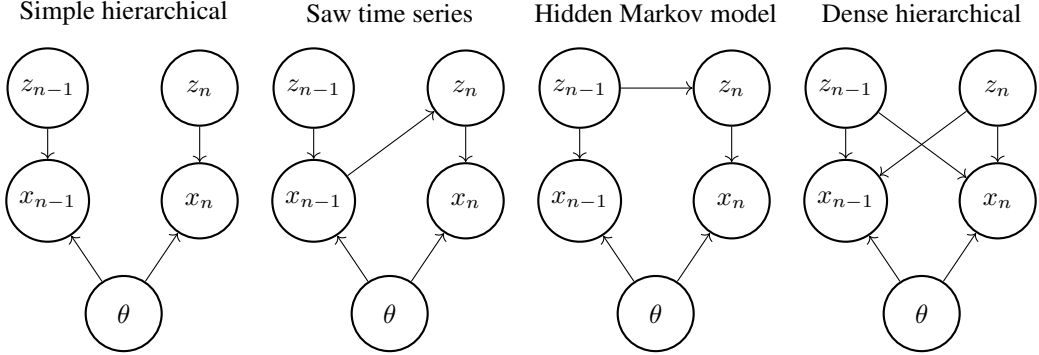

The CAVI rule uses the factorization of $q$ but makes no assumption about the model.
We will reason about the factorization of $p(\theta, {\bf z}, {\bf x})$ using a directed acyclic graph (DAG) representation and define an \textit{exchangeable latent variable model} based on a set of common assumptions.
\begin{definition}
An exchangeable latent variable model $p(\theta, {\bf z}, {\bf x})$ verifies 
\begin{itemize}
  \item[] (i) {\normalfont local dependence}, i.e. there is an edge between $x_n$ and $z_n$ and \mbox{$p(x_n \mid \mbz, \theta) \neq p(x_n \mid \mbz_{-n}, \theta)$}.

  \item[] (ii) \textbf{{\normalfont conditional independence}} of $x_n$ on $\mbx_{-n}$ given $\mbz$ and $\theta$, i.e. \mbox{$p({\bf x} \mid {\bf z}, \theta) = \prod_{n = 1}^N p(x_n \mid {\bf z}, \theta)$}.

  \item[] (iii) {\normalfont common distributional forms}: no distribution involving $\theta$, ${\bf z}$ or ${\bf x}$ depends on the index of the random variables.

\end{itemize}

% (i) {\normalfont local dependence}, i.e. there is an edge between $x_n$ and $z_n$ and \mbox{$p(x_n \mid \mbz, \theta) \neq p(x_n \mid \mbz_{-n}, \theta)$}, (ii) {\normalfont conditional independence}, i.e. \mbox{$p({\bf x} \mid {\bf z}, \theta) = \prod_{n = 1}^N p(x_n \mid {\bf z}, \theta)$} and (iii) {\normalfont common distributional forms}: no distribution involving $\theta$, ${\bf z}$ or ${\bf x}$ depends on the index of the random variables.
\end{definition}
Figure~\ref{fig:models} presents several graphical models which conform to the above definition, including hierarchical models and certain time series.

\begin{definition}
   Following \citet{Agrawal:2021}, we define a \textit{simple hierarchical model} as an exchangeable latent variable model that factorizes according to:
   \begin{equation*}
     p(\theta, \mbz, \mbx) = p(\theta) \prod_{n = 1}^N p(z_n \mid \theta) p(x_n \mid z_n, \theta).
   \end{equation*}
\end{definition}

The main result of this section states that the existence of an ideal inference function is, in general, equivalent to $p(\theta, {\bf z}, {\bf x})$ being a simple hierarchical model.
\begin{theorem}  \label{thm:learnable}
  Consider an exchangeable latent variable model $p(\theta, {\bf z}, {\bf x})$.
  \begin{enumerate}
      \item Suppose $p(\theta, {\bf z}, {\bf x})$ is a simple hierarchical model. Then an ideal inference function exists.

      \item Suppose an ideal inference function exists for each $p(\theta, \mbz, \mbx)$ that factorizes according to a graph. Then this graph is the class of simple hierarchical models.
  \end{enumerate}
\end{theorem}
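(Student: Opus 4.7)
The plan is to reduce both parts of the theorem to direct manipulation of the CAVI rule from Lemma~\ref{lemma:cavi}, which writes $q(z_n \s \nu_n^*)$ explicitly in terms of $\log p(\theta, \mbz, \mbx)$.

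For Part~1 (existence), I substitute the simple hierarchical factorization $p(\theta)\prod_{m} p(z_m\mid\theta)p(x_m\mid z_m,\theta)$ into the CAVI formula. Every factor of $\log p$ that does not contain $z_n$ is absorbed into the proportionality constant, leaving
\[
 q(z_n \s \nu_n^*) \propto \exp\left\{ \EE{q(\theta \s \nu_0^*)}{\log p(z_n \mid \theta) + \log p(x_n \mid z_n, \theta)} \right\}.
\]
By common distributional forms, this expression depends on $n$ only through $x_n$ (the parameter $\nu_0^*$ being common to all $n$). Thus the map sending $x$ to the parameter that makes $q(z\s\nu)$ proportional to the above expression with $x_n$ replaced by $x$ defines an inference function $f^\star:\mathcal X \to \mathcal U$ satisfying $f^\star(x_n) = \nu_n^*$ for every $n$ and every data set $\mbx$.

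For Part~2 (necessity) I argue by contrapositive. Suppose the DAG is exchangeable but differs from the simple hierarchical graph. Conditional independence of the $x_j$'s rules out $x_j\to x_n$ edges, so any extra edge must fall into one of three orbits: (i) $x_j \to z_n$ with $j\neq n$, (ii) $z_j \to z_n$ with $j\neq n$, or (iii) $z_j \to x_n$ with $j\neq n$ (reverse directions reduce to these by swapping labels). Applying Lemma~\ref{lemma:cavi} case by case, the CAVI expression for $\nu_n^*$ retains a residual term depending either directly on $x_j$ (case i) or indirectly on $x_j$ through $q(z_j \s \nu_j^*)$ after propagating through the CAVI fixed-point (cases ii and iii). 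Picking a data set with $x_n = x_m$ whose neighbouring values differ, and choosing the conditional densities so that the residual dependence is not constant, forces $\nu_n^* \neq \nu_m^*$; this rules out any $f$ satisfying $f(x_n) = f(x_m)$, contradicting the existence of an ideal inference function.

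The main obstacle is the last step of Part~2: upgrading the qualitative statement ``the CAVI expression depends on $x_j$'' into the strict inequality $\nu_n^* \neq \nu_m^*$ at an explicit instance. For each of the three edge orbits I would exhibit a concrete choice of conditional densities respecting the graph, together with a finite data set $\mbx$ on which the dependence does not cancel. A clean route is to use conjugate exponential-family conditionals, modelled on the saw, HMM, and dense hierarchical graphs of Figure~\ref{fig:models}, so that the CAVI fixed-point equation becomes an explicit algebraic formula in the sufficient statistics of $\mbx_{-n}$ whose non-triviality is verified at a single numerical instance.
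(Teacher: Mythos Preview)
Your proposal is correct and takes essentially the same approach as the paper: both parts rest on substituting the model factorization into the CAVI rule (Lemma~\ref{lemma:cavi}) and isolating the terms that depend on $x_{i\neq n}$. For Part~2 the paper argues directly that these residual terms can vanish for every distribution on the graph only if the non-local $z_j\!\to\! x_n$ and $z_j\!\to\! z_n$ edges are absent, whereas you organize the same step as a contrapositive with per-edge counterexample constructions; this is a more explicit rendering of exactly the step the paper handles by asserting that the offending contributions ``must be absorbed into the normalizing constant and effectively vanish'', and your proposed exponential-family instances mirror the Gaussian construction the paper itself uses for the HMM in Appendix~A.5.
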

\begin{proof}
    See Appendix~A.2.
\end{proof}
\begin{remark}
The converse in Theorem~\ref{thm:learnable} (item 2) is stated for a class of models, meaning the result must hold for any choice of distribution $p(\theta, {\bf z}, {\bf x})$ supported by the graph.
This excludes edge cases that arise due trivial symmetries (see Appendix~A.3).
\end{remark}
%This is to exclude cases where we have a model that is not a simple hierarchical model but for a very particular choice of distribution, an ideal inference function still exists.
% Such a case can be constructed using trivial symmetries (see Appendix).

We now provide an outline of the proof for Theorem~\ref{thm:learnable}.
Applying the CAVI rule to the simple hierarchical model, we can show that F-VI's optimal solution takes the form
\begin{align} \label{eq:simple-map}
  q(z_n \s \nu^*_n) \propto \int_\Theta g(\theta, \mbx) [m(\theta, z_n) + h(\theta, z_n, x_n)] \text d \theta,
\end{align}
where $g(\theta, \mbx) = q(\theta \s \nu^*_0(\mbx))$, $m(\theta, z_n) = \log p(z_n \mid \theta)$ and $h(\theta, x_n) = \log p(x_n \mid z_n, \theta)$.
While the function $g$ depends on the entire data set $\mbx$, it is common to all factors of $q(\mbz)$.
Meanwhile, elements specific to $z_n$ only depend on $x_n$.
Moreover, $x_n = x_m$ implies $q(z_n \s \nu^*_n) = q(z_m \s \nu^*_m)$.
Since a parametric density is uniquely defined by its parameter, we also have a map between $x_n$ and $\nu^*_n$.

% \begin{align} \label{eq:simple-map}
%   & q(z_n \s \nu^*_n) \nonumber \\ 
%   & \propto \exp \left \{ \int_\Theta q(\text d \theta \s \nu^*_0({\bf x}))
%   \log p(z_n \mid \theta) + \log p(x_n \mid z_n, \theta) \right \}.
% \end{align}
% }
%
% \hspace{-0.14in}
%
% \Cref{eq:simple-map} takes the form prescribed by \Cref{eq:learnable-density}, with $g(\theta, {\bf x}) = q(\text d \theta \s \nu^*_0({\bf x}))$, $m(\theta, z_n) = \log p(z_n \mid \theta)$ and \mbox{$h(\theta, x_n) =  \log p(x_n \mid z_n, \theta)$}.
% That is we have a global contribution which depends on all elements of ${\bf x}$ and is common to all factors of $q$, and a local contribution which only depends on $x_n$.
% Hence there is a dataset-dependent function from $x_n$ to $q(z_n \s \nu^*_n)$. 

% Thus per Propositions~\ref{prop:interpolation} and \ref{prop:condition} the amortization gap can be closed.

We show item (2) by starting with the CAVI rule for a general $p(\theta, {\bf z}, {\bf x})$ and identifying terms in the kernel of $q(z_n \s \nu^*_n)$ which are not common to all factors of $q(\mbz)$ but depend on elements of ${\bf x}$ other than $x_n$.
To ``eliminate'' these terms, we need to sever edges in the graphical representation of $p(\theta, {\bf z}, {\bf x})$.
Once we remove the offending terms, we are left with a simple hierarchical model.

\subsection{EXAMPLE: LINEAR PROBABILISTIC MODEL} \label{sec:linear}

We now provide an illustrative example where an ideal inference function can be written in closed form.
% In this example, $p(z_n \mid \mbx) \neq p(z_n \mid x_n)$ and furthermore, we find that $f_\phi$ can solve the amortization interpolation problem using a constant number of variational parameters, no matter how large $N$ is.
%Furthermore we find that in this example the inference function $f_\phi$ can solve the amortization interpolation problem by learning $f_\mbx$ with a constant number of variational parameters, no matter how large $N$ is.
%
Consider the simple hierarchical model,
\begin{equation}
    p(\theta) \propto 1; \ \  p(z_n) = \mathcal N (0, 1); \ \ p(x_n \mid z_n, \theta) = \mathcal N (\theta + \tau z, \sigma^2),
\end{equation}
where $\tau \in \mathbb R$ and $\sigma \in \mathbb R$ are fixed.
In this example, the marginal posterior $p(z_n \mid \mbx)$ depends on the entire data set $\mbx$, rather than on $x_n$ alone, a phenomenon known as \textit{partial pooling} in the Bayesian statistics litterature \citep{Gelman:2013}.
This is because rather than hold $\theta$ fixed, we marginalize over it to do full Bayesian inference.

% In our notation, the second argument of normal() is the standard deviation.
% For this example, FVI's optimal solution can be worked out analytically.
%
\begin{proposition}  \label{thm:linear}
   Let $q(z_n \s \nu^*)$ be the optimal solution returned by F-VI, when optimizing over the family of factorized Gaussians.
   Then
   \begin{equation} \label{eq:FVI-linear}
     \mathbb E_{q(z_n \s \nu^*_n)} (z_n) = \frac{\tau}{\sigma^2 + \tau^2} (x_n - \bar x); \ \ \mathrm{Var}_{q(z_n \s \nu^*)} (z_n) = \xi^2,
   \end{equation}
   where $\bar x$ is the average of $x_{1:N}$, and $\xi$ is a constant.
\end{proposition}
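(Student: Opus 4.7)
The plan is to apply the CAVI rule from Lemma~\ref{lemma:cavi} directly. Since $p(z_n)$ and $p(x_n\mid z_n,\theta)$ are Gaussian and the prior on $\theta$ is flat, the log joint is quadratic in each coordinate, so the factorized Gaussian family is closed under the CAVI updates and the fixed point can be solved in closed form.

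First I would expand
\begin{equation*}
  \log p(\theta, \mbz, \mbx) = \const - \tfrac{1}{2}\sum_{n=1}^N z_n^2 - \tfrac{1}{2\sigma^2}\sum_{n=1}^N (x_n - \theta - \tau z_n)^2,
\end{equation*}
and apply CAVI to $q(z_n \s \nu_n^*)$. Writing $m_\theta := \mathbb{E}_{q(\theta \s \nu_0^*)}(\theta)$, taking the expectation over $\theta$ in the $z_n$-kernel produces a quadratic $-\tfrac{1}{2}(1 + \tau^2/\sigma^2)\, z_n^2 + (\tau/\sigma^2)(x_n - m_\theta)\, z_n$, so that $q(z_n \s \nu_n^*)$ is Gaussian with
\begin{equation*}
  \mathbb{E}_{q(z_n \s \nu_n^*)}(z_n) = \frac{\tau(x_n - m_\theta)}{\sigma^2 + \tau^2}, \qquad \mathrm{Var}_{q(z_n \s \nu_n^*)}(z_n) = \frac{\sigma^2}{\sigma^2 + \tau^2}.
\end{equation*}
The variance is already the claimed constant $\xi^2$, independent of $n$ and of the data.

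Next I would apply CAVI to $q(\theta \s \nu_0^*)$. The flat prior contributes nothing, and $-\tfrac{1}{2\sigma^2}\sum_n \mathbb{E}_{q(z_n \s \nu_n^*)}(x_n - \theta - \tau z_n)^2$ is quadratic in $\theta$, giving a Gaussian with mean $\bar{x} - \tau\, \bar m_z$, where $\bar m_z := \tfrac{1}{N}\sum_n \mathbb{E}_{q(z_n \s \nu_n^*)}(z_n)$.

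Finally, I would close the fixed-point system. Averaging the $z_n$-update gives $\bar m_z = (\tau/(\sigma^2+\tau^2))(\bar{x} - m_\theta)$; substituting into $m_\theta = \bar{x} - \tau\, \bar m_z$ and collecting terms yields $m_\theta\cdot \sigma^2/(\sigma^2+\tau^2) = \bar{x}\cdot \sigma^2/(\sigma^2+\tau^2)$, forcing $m_\theta = \bar{x}$. Plugging back in gives $\mathbb{E}_{q(z_n \s \nu_n^*)}(z_n) = \tau(x_n - \bar{x})/(\sigma^2 + \tau^2)$, as claimed. The proof is essentially bookkeeping; the only mildly non-routine step is recognizing that the coupled CAVI equations for $(m_{z_1},\ldots,m_{z_N},m_\theta)$ collapse, upon averaging, to a single scalar equation for $m_\theta$ whose unique solution is the sample mean.
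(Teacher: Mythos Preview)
Your argument is correct and considerably more direct than the paper's. The paper proves the result by first computing the exact Gaussian posterior $p(\theta,\mbz\mid\mbx)$: it marginalizes out $\theta$ to obtain the precision matrix $\Psi$ of $p(\mbz\mid\mbx)$, inverts $\Psi$ via the Sherman--Morrison formula to recover the posterior means, and then separately reads off the full precision matrix $\Phi$ of $p(\theta,\mbz\mid\mbx)$ to conclude that $\mathrm{Var}_{q^*}(z_n)=1/\Phi_{nn}$ is data-independent. Only at the very end does it invoke the known fact that, for a Gaussian target, the factorized Gaussian F-VI optimum matches the posterior mean and the inverse of the diagonal of the precision.

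You instead stay entirely at the variational level: apply the CAVI fixed-point condition to $q(z_n)$ and $q(\theta)$, note that both updates produce Gaussians (so the restriction to the Gaussian family is lossless), and then solve the resulting coupled scalar equations $m_{z_n}=\tau(x_n-m_\theta)/(\sigma^2+\tau^2)$ and $m_\theta=\bar x-\tau\bar m_z$ by averaging over $n$. This bypasses the marginalization of $\theta$ and the Sherman--Morrison inversion altogether, and it delivers the explicit variance $\xi^2=\sigma^2/(\sigma^2+\tau^2)$ in one line. What the paper's route buys is the full posterior covariance (useful elsewhere), but for the proposition as stated your approach is both shorter and more transparent. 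The only small point you might add for completeness is that the ELBO here is strictly concave in the variational means (the target is Gaussian), so the CAVI fixed point you found is the unique global optimum rather than merely a stationary point.
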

\begin{proof}
  See Appendix~A.4.
\end{proof}
The bulk of the proof is to work out the posterior $p(\theta, \mbz \mid \mbx)$ analytically.
Note a simple argument of conjugacy does not suffice since we also need to marginalize over $\theta$.
%
% \begin{equation}  \label{eq:posterior}
% p(z_n \mid {\bf x}) = \mathcal N \left (\frac{\tau}{\sigma^2 + \tau^2} (x_n - \bar x), s  \right ),
%  \end{equation}
 %
% for some constant $s \neq \xi$.
% 
% Since the posterior is normal, F-VI correctly estimates the correct posterior mean \citep{Turner:2011, Margossian:2023} and, in this case, the variational standard deviation is constant.
% % (F-VI  only correctly estimates the mean and $s \neq \xi$.)
% The thorny details are left to the Appendix.

We can rewrite the optimal mean (\Cref{eq:FVI-linear}) as a linear function,
 \begin{eqnarray}
   \mathbb E_{q(z_n \s \nu^*_n)} (z_n) = \alpha_0({\bf x}) + \alpha x_n; \nonumber \\
   \alpha_0 ({\bf x}) = - \frac{\tau \bar x}{\sigma^2 + \tau^2}; \ \
   \alpha = \frac{\tau}{\sigma^2 + \tau^2}.
 \end{eqnarray}
 For A-VI to match F-VI's solution, we need to learn a linear function for the mean and a constant for the variance, and so, regardless of the number of observations $N$, we can close the amortization gap by learning 3 variational parameters.

% Now, the optimal mean (\cref{eq:FVI-linear}) may be written as a linear function,
%  \begin{eqnarray}
%    \mathbb E_{q(z_n \s \nu^*_n)} (z_n) = \alpha_0({\bf x}) + \alpha x_n; \nonumber \\
%    \alpha_0 ({\bf x}) = - \frac{\tau \bar x}{\sigma^2 + \tau^2}; \ \
%    \alpha = \frac{\tau}{\sigma^2 + \tau^2},
%  \end{eqnarray}
% %
% while the optimal standard deviation a constant.
% So, for A-VI to match F-VI's solution, we need to learn a linear function for the mean and a constant for the standard deviation. 
% Regardless of the number of observations $N$, we can close the amortization gap by learning 3 variational parameters.
% We will corroborate this numerically in \Cref{sec:experiment}.
 
This example provides intuition behind Theorem~\ref{thm:learnable}, which connects A-VI to classical ideas in hierarchical Bayesian modeling.
In the considered example, the posterior mean demonstrates partial pooling, a key property of hierarchical models \citep{Gelman:2013}:
the posterior mean of $z_n$ depends on both the local observation $x_n$ and on the non-local observations through $\bar x$.
Even though $p(z_n \mid {\bf x}) \neq p(z_n \mid x_n)$, the posterior density of each latent variable is distinguished by the local influence of $x_n$, while the global influence of $\bar x$ is the same for all latent variables.
As a result
% \begin{equation}
    $x_n = x_m \implies p(z_n \mid {\bf x}) = p(z_m \mid {\bf x})$,
 %\end{equation}
 and an ideal inference function exists.

\begin{figure*}
      \centering
      \includegraphics[width=1.67in]{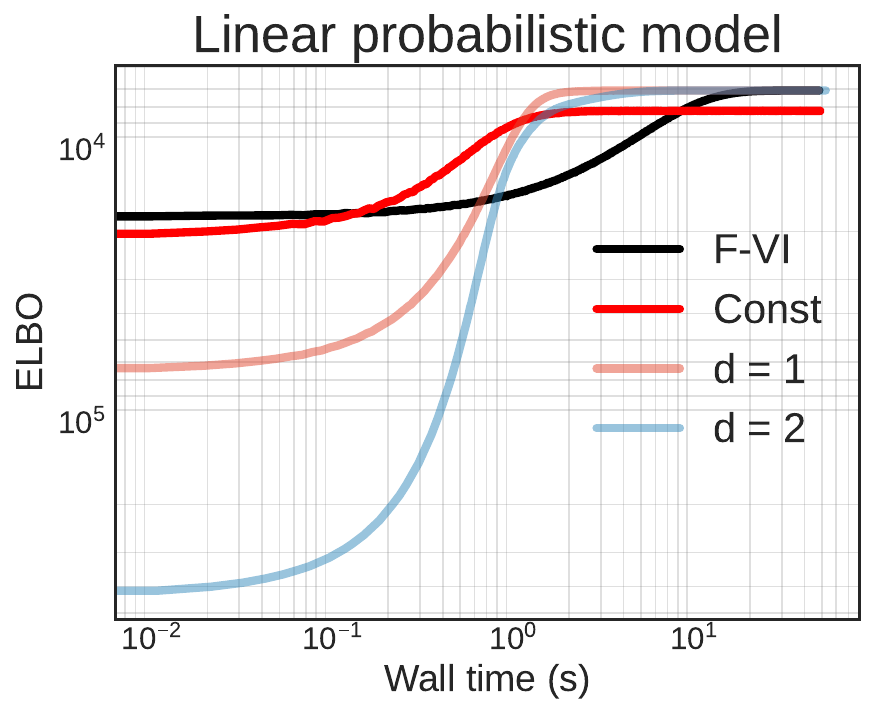}
      \includegraphics[width=1.6in]{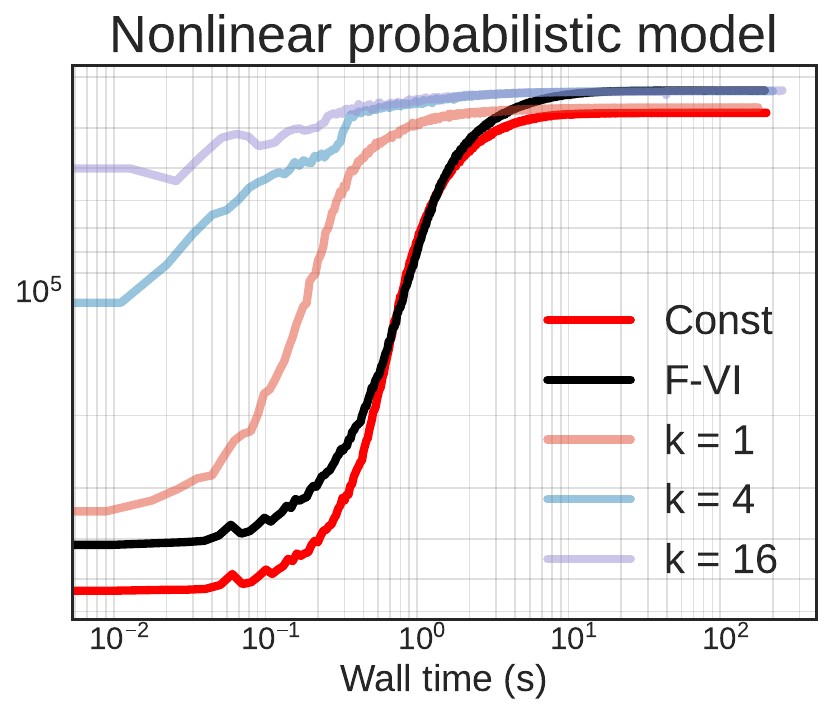}
      \includegraphics[width=1.6in]{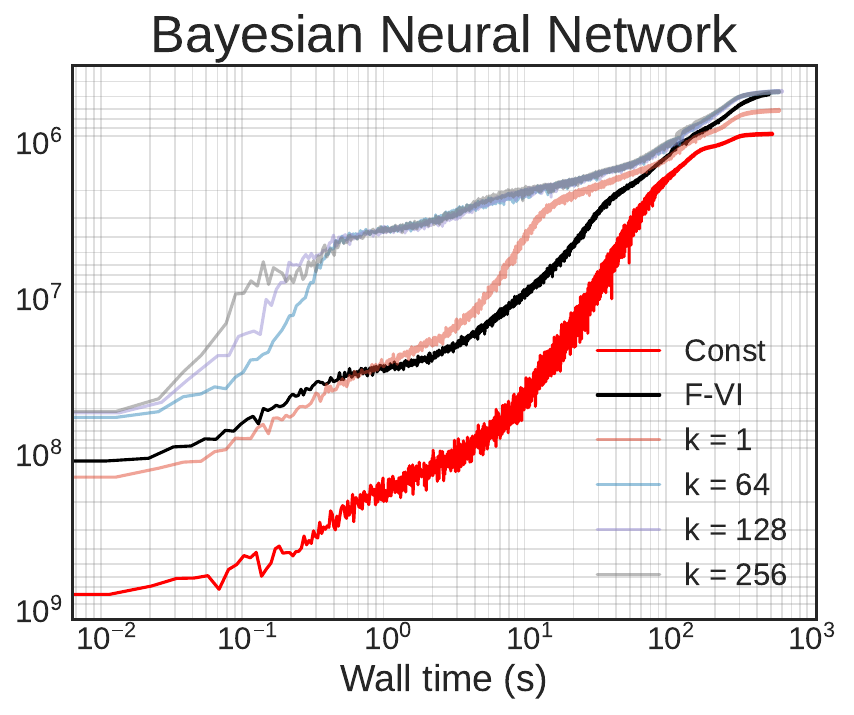}
      \caption{ \textit{Examples of optimization paths. As benchmarks, we use F-VI and a constant factor algorithm which assigns the same distribution to all $q(z_n)$. A-VI is then run using different classes of inference functions: (left) we vary the degree $d$ of a learning polynomial; (middle, right) we vary the width $k$ of an inference neural network.
      For a sufficiently complex inference function, we find that A-VI attains the same ELBO as F-VI, meaning the amortization gap is closed.
      For results across multiple seeds, see Figure~\ref{fig:iter_to_convergence}.
      }}
      \label{fig:optimization_paths}
\end{figure*}

\subsection{Further factorizations of $p(\theta, \mbz, \mbx)$}

Theorem~\ref{thm:learnable} tells us that in general, A-VI cannot achieve F-VI's solution for latent variable models other than the simple hierarchical model.
We show however that for certain models, it is possible to extend the domain of the inference function in order to close the amortization gap.

A general strategy to verify if the amortization interpolation problem can be solved is to prove the existence of a (potentially expanded) ideal inference function by applying the CAVI rule (Lemma~\ref{lemma:cavi}) to any model $p(\theta, \mbz, \mbx)$ of interest.

%In another example, we establish that such a strategy is not feasible.

{\bf Saw time series.} Consider the saw time series model,
\begin{equation} \label{eq:saw_time}
  p(\theta, {\bf z}, {\bf x}) = p(\theta) \prod_{n = 1}^N p(z_n \mid x_{n - 1}) p(x_n \mid z_n, \theta),
\end{equation}
where each latent variable $z_n$ depends on the previous observation $x_{n - 1}$.
Applying the CAVI rule, we have
\begin{equation}  \label{eq:saw-FVI}
          q(z_n; \nu^*_n) \propto p(z_n \mid x_{n - 1}) \exp \left \{\mathbb E_{q(\theta \s \nu_0^*)} [\log p(x_n \mid z_n, \theta)] \right \},
\end{equation}
which defines a (data-set dependent) function from $(x_{n - 1}, x_n)$ to the optimal variational factor.
There is no ideal inference function, \mbox{$f_{\bf x}: \mathcal X \to \mathcal U$}, however, there exists an ideal inference function $f_{\bf x}: \mathcal X \times \mathcal X \to \mathcal U$, such that $f_{\bf x}(x_{n - 1}, x_n) = \nu^*_n$ for all $n > 1$.

\begin{remark}  \label{remark:edge}
  When extending the domain of the inference function, we must address edge cases which may not have the requisite argument.
  For example, inference for $z_1$ requires passing $(x_0, x_1)$ to $f_\phi$, but $x_0$ is not observed.
  In this case, we assign a distinct variational parameter $\nu_1$ to the factor $q(z_1)$, rather than use amortization.
\end{remark}

% In this case extending the domain of the inference function ensures the existence of a solution to the amortization interpolation problem.

{\bf Hidden Markov model (HMM).} We now consider another time series, where even after expanding the domain of the inference function, the amortization gap cannot be closed.
The joint of the HMM is
\begin{equation} \label{eq:hmm}
      p(\theta, {\bf z}, {\bf x}) = p(\theta) \prod_{n = 1}^N p(z_n \mid z_{n - 1}) p (x_n \mid z_n, \theta).
\end{equation}
The next proposition states that there is in general no ideal inference function $f: \mathcal X \to \mathcal U$ and furthermore expanding the domain of the inference functions still yields no ideal function.
\begin{proposition} \label{thm:hmm}
  Consider the HMM of \Cref{eq:hmm}.
  Let ${\bf w}_n \in \mathcal W$ be a strict subset of ${\bf x}$.
  There exist HMMs with no ideal inference function $f_{\bf x}: \mathcal W \to \mathcal U$.
  That is, we cannot construct an $f_\mbx$ such that $f_{\bf x}({\bf w}_n) = \nu^*_n$ for all $n$ which do not constitute an edge case (\Cref{remark:edge}).
\end{proposition}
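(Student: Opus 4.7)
The plan is to apply the CAVI rule (Lemma~\ref{lemma:cavi}) to the HMM of \Cref{eq:hmm} and show that the optimal variational factor $q(z_n \s \nu_n^*)$ is governed by a two-sided recursion along the chain, so that $\nu_n^*$ generically depends on every entry of $\mbx$; I would then exhibit a concrete HMM and data set in which two non-edge indices share identical local data but distinct optimal factors, ruling out any ideal $f_{\mbx}$ defined on a strict-subset domain.

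First, I would identify the terms in $\log p(\theta, \mbz, \mbx)$ that involve $z_n$, namely $\log p(z_n \mid z_{n-1})$, $\log p(z_{n+1} \mid z_n)$, and $\log p(x_n \mid z_n, \theta)$, and then take expectations under $q(\theta \s \nu_0^*) q(\mbz_{-n} \s \nu^*)$ via Lemma~\ref{lemma:cavi} to obtain
\begin{align*}
  \log q(z_n \s \nu_n^*) &= \EE{q(z_{n-1} \s \nu_{n-1}^*)}{\log p(z_n \mid z_{n-1})} \\
  &\quad + \EE{q(z_{n+1} \s \nu_{n+1}^*)}{\log p(z_{n+1} \mid z_n)} \\
  &\quad + \EE{q(\theta \s \nu_0^*)}{\log p(x_n \mid z_n, \theta)} + \const.
\end{align*}
This fixed-point equation couples $\nu_n^*$ to $x_n$ and to its neighbours $\nu_{n \pm 1}^*$, a variational analogue of forward--backward message passing, so iterating from both endpoints shows that $\nu_n^*$ generically depends on every $x_k$, with the sensitivity tunable through the HMM's transition and emission parameters.

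Next, I would construct a counterexample as follows. Fix any candidate rule $\mathbf w_n \subsetneq \mbx$ (say, a window of fixed width $2k+1$). Take a binary HMM with near-deterministic transitions $p(z_n = z_{n-1}) = 1 - \varepsilon$ and Gaussian emissions, and a data set of length $N \gg k$ whose interior entries are all equal but whose extremes $x_1$ and $x_N$ take strongly contrasting values. Two indices $n_1, n_2$ deep in the bulk then satisfy $\mathbf w_{n_1} = \mathbf w_{n_2}$, while the recursion of the previous step biases $\nu_{n_1}^*$ toward $x_1$ and $\nu_{n_2}^*$ toward $x_N$, so $\nu_{n_1}^* \neq \nu_{n_2}^*$, contradicting $f_{\mbx}(\mathbf w_{n_1}) = f_{\mbx}(\mathbf w_{n_2})$. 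Chain-endpoint indices where the recursion is truncated are absorbed by the edge-case clause of Remark~\ref{remark:edge}.

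The hard part will be verifying rigorously that the coupled, nonlinear CAVI fixed point really does inherit long-range sensitivity to every $x_k$, rather than decaying to zero too quickly in the bulk. A clean route is to specialise to a Gaussian HMM in which the fixed-point equations for the variational means collapse to a tridiagonal linear system; its inverse has strictly nonzero off-diagonal entries, making the dependence of $\nu_n^*$ on each $x_k$ explicit and quantitative, which suffices to lower-bound the contrast between $\nu_{n_1}^*$ and $\nu_{n_2}^*$ needed in the counterexample.
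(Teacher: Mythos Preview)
Your final paragraph---specialising to a Gaussian HMM so that the optimal variational means solve a tridiagonal linear system---is exactly the route the paper takes, so the core idea is right. The paper, however, makes one further simplification that renders your earlier construction (binary HMM, near-deterministic transitions, contrasting endpoint data, fixed window $\mathbf w_n$) unnecessary: it sets \emph{every} observation equal, $x_1 = x_2 = \cdots = x_N = 1$. With constant data, any strict-subset rule whatsoever yields $\mathbf w_n = \mathbf w_m$ for all $n,m$, so there is nothing to check about windows versus other subset schemes. The asymmetry in $\nu_n^*$ then comes entirely from the boundary effects of the finite chain (the precision matrix has a distinguished first row), not from asymmetry in the data; the paper exhibits this by plotting the components of $\boldsymbol\mu = -2\Psi^{-1}\mathbf x$ for $N=100$ and observing they are not constant.

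Your version would also work, but it is harder to make rigorous: you would need to control how fast the influence of $x_1$ and $x_N$ decays into the bulk and ensure it has not died out by the time you reach $n_1, n_2$, and you would need a separate argument for subset rules that are not sliding windows. The all-equal-data trick sidesteps both issues. Note also that the paper fixes $\theta$ to a point estimate in the counterexample, so the $q(\theta \s \nu_0^*)$ expectation in your CAVI display drops out.
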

\begin{proof}
  See Appendix~A.5.
\end{proof}

\begin{remark}
    If we extend the domain of the inference function to the entire data set $\mbx$, then A-VI reduces to F-VI and the amortization gap is trivially closed.
\end{remark}

The proof of Proposition~\ref{thm:hmm} is obtained by constructing a (non-adversarial) example.
We argue the above result holds in general and provide a conceptual explanation as to why.
In the simple hierarchical and saw time series models the existence of an ideal inference function (respectively over $\mathcal X$ and $\mathcal X \times \mathcal X$) is due to the fact that each data point either has a local or a global influence on $q(z_n \s \nu^*_n)$.
In the case of an HMM, there is no common global influence: any observation $x_m$ will have a different influence on the variational factor for each latent variable $z_n$.
Moreover, each observation is, to a varying degree, local to any latent variable.
A similar reasoning can be applied to the dense hierarchical model (Figure~\ref{fig:models}), which includes Gaussian process models.

\section{NUMERICAL EXPERIMENTS} \label{sec:experiment}

We corroborate our theoretical results on several examples and explore the trade-off between the complexity of the inference function, the quality of the approximation, and the convergence time of the optimization.
In all our examples, we do full Bayesian inference over $\theta$ and $\mbz$.
The code to reproduce the experiments can be found at \url{https://github.com/charlesm93/AVI-when-and-why}. % [RETRACTED].

\subsection{Experimental setup} 

As our variational approximation, we use the family of factorized Gaussians.
Our benchmarks are a \textit{constant factor algorithm}, which assigns the same Gaussian factor $\bar q$ to each latent variable $z_n$, and F-VI.
The constant factor and F-VI are respectively the poorest and richest variational families.

With A-VI, we learn $q(\theta)$, just as in F-VI, but amortize the inference for $q(\mbz)$: specifically, we fit an inference function between $x_n$, and the mean and variance of the Gaussian factor $q(z_n)$.
For the saw-time series example (\cref{sec:sawtime}) the input of the inference function is expanded to $(x_{n - 1}, x_n)$.

To optimize the KL-divergence, we maximize the evidence lower-bound (ELBO),
\begin{equation}
  \mathbb E_{q(\mbz, \theta \s \nu)} \left [\log p(\theta, {\bf z}, {\bf x}) - \log q(\theta, {\bf z}) \right],
\end{equation}
estimated via Monte Carlo.
For all experiments we use a conservative 100 draws to estimate the ELBO, except when training a Bayesian neural network, where we use a mini-batching strategy instead.
  
We employ the Adam optimizer \citep{Kingma:2015} in \texttt{PyTorch} \citep{Pytorch:2019} and use the reparameterization trick to evaluate the gradients \citep{Kingma:2014}.
For F-VI, the optimization is directly performed over the parameters of the factorized Gaussian $q(\theta) q(\mbz)$.
For A-VI, the optimization is over the parameters of the Gaussian $q(\theta)$ and over the parameters of the inference function (e.g. the weights of the inference neural network) which maps $x_n$ to the parameters of $q(z_n)$.
We find a learning rate of 1e-3 works well across applications.
The optimizer is stochastic because of the random initialization and the Monte Carlo estimation of the ELBO, and so we repeat each experiment 10 times.
Depending on the choice of variational family, the computation cost per optimization step can vary.
Therefore we report the ELBO against the wall time when evaluating the performance of each algorithm.

\subsection{Linear probabilistic model}
  We begin with the example from \Cref{sec:linear}, using $N = 10,000$ simulated observations, obtained by drawing $\theta$ and $\mbz$ from standard normals.
  A-VI's inference function is a polynomial of degree $d$ and we require $d = 1$ to learn the optimal variational parameters (Proposition~\ref{thm:linear}).
  \Cref{fig:optimization_paths} shows the optimization paths over 5,000 steps for a single seed and \Cref{fig:iter_to_convergence} summarises the number of iterations to converge across seeds for each VI algorithm.
  Consistent with our analysis in \Cref{sec:linear}, A-VI attains the same outcome as F-VI for $d \ge 1$.
  Furthermore, we find that A-VI requires an order of magnitude less time to converge.
  Naturally, using $d = 2$ also yields an optimal solution, however we observe that A-VI then converges more slowly.

  \begin{figure*}
\centering
      \includegraphics[width=1.5in]{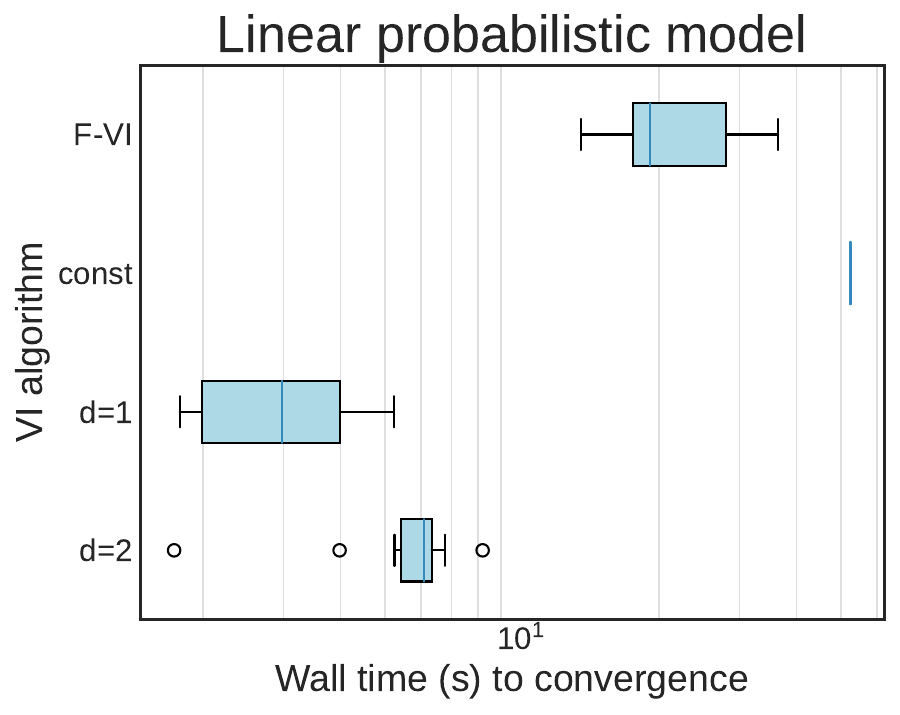}
      \includegraphics[width=1.5in]{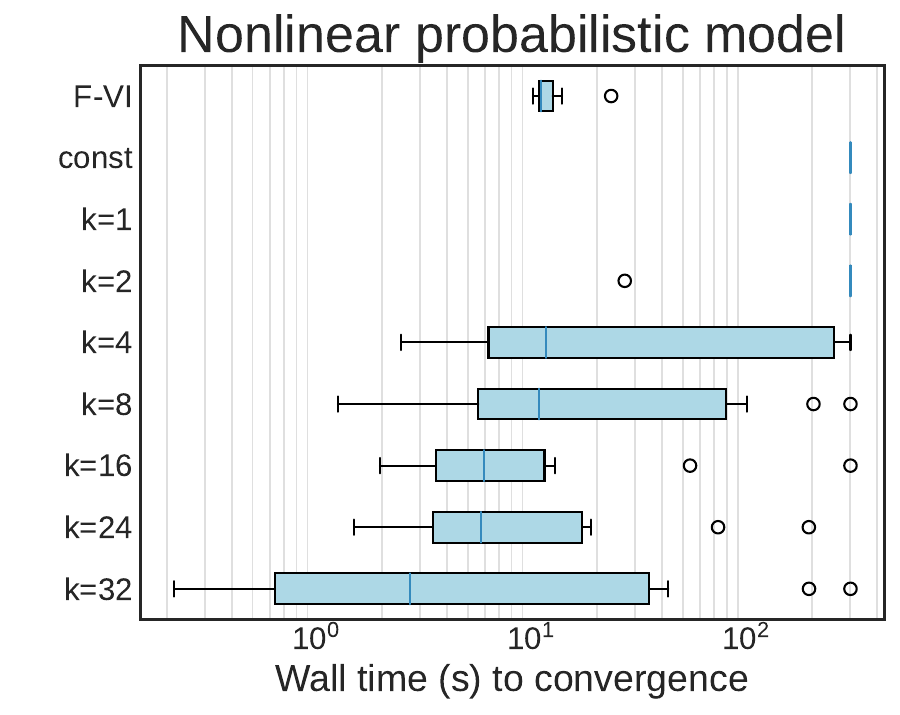}
      \includegraphics[width=1.5in]{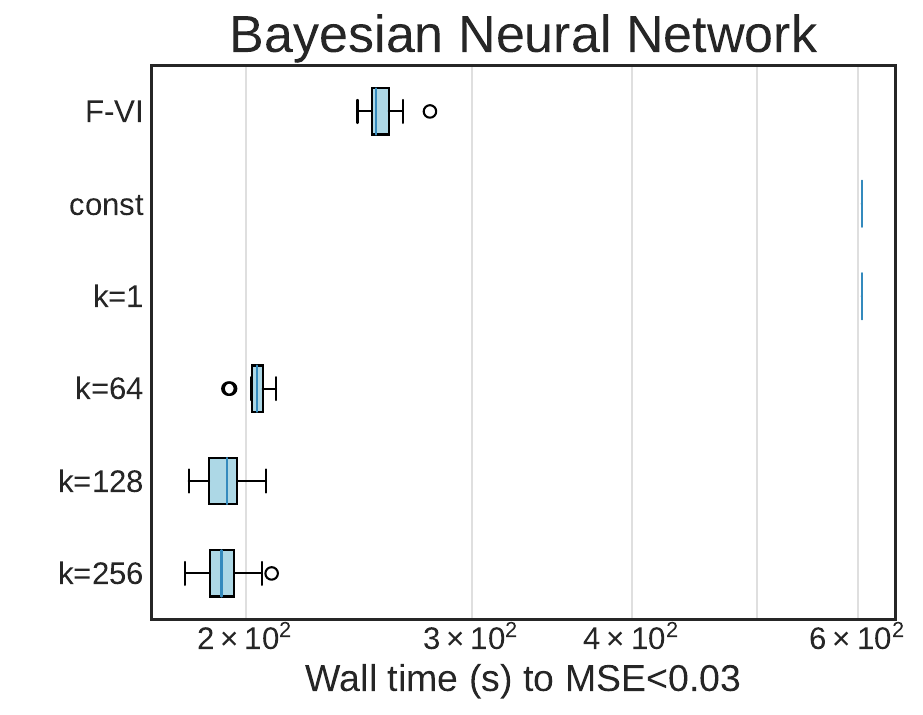}
      \includegraphics[width=1.5in]{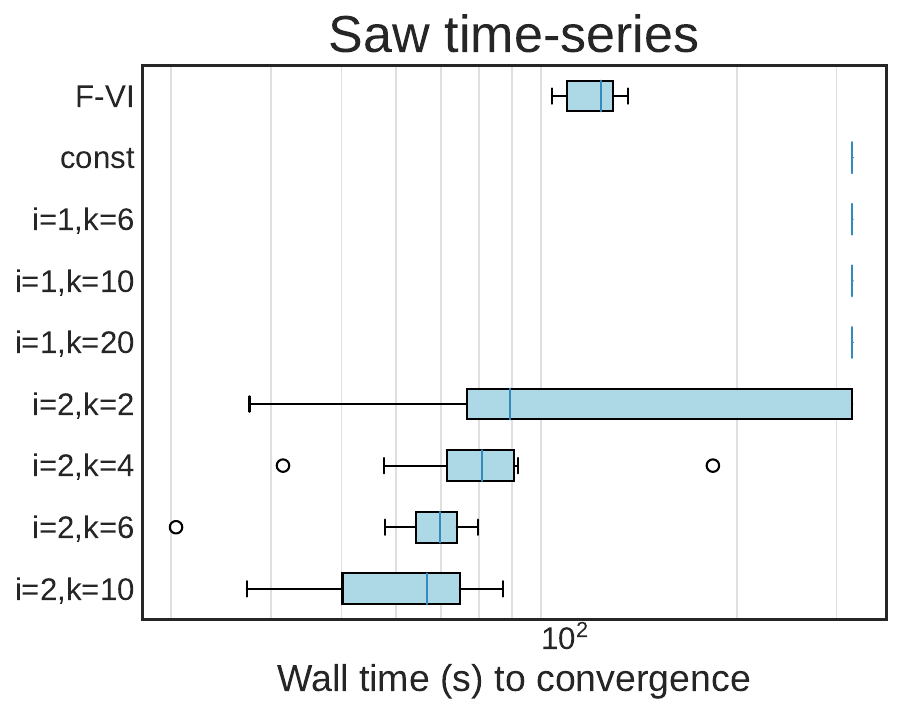}
      \caption{ \textit{Wall time to convergence. We run each experiment 10 times and summarize the wall time required for the ELBO to converge for each VI algorithm. For the Bayesian Neural Network, we report convergence in terms of MSE for the image reconstruction.
      Algorithms with a collapsed box plot on the right do not close the amortization gap.
      }}
      \label{fig:iter_to_convergence}
\end{figure*}

  \subsection{Nonlinear probabilistic model}
  This is a variation on the previous model, with a nonlinear likelihood.
  The joint distribution is then,
  \begin{align}
   p(\theta) & = \mathcal N(0, 1) \nonumber \\
   p(z_n) & = \mathcal N(0, 1) \nonumber \\
   p(x_n \mid z_n, \theta) & = \mathcal N \left (\theta + z_n (1 + \sin(z_n)), \cos^2(z_n) \right).
  \end{align}
  $N = 10,000$ observations are obtained by simulation.
  The inference function $f_\phi$ is a neural network with two hidden layers of width $k$ and ReLu activation.
  A-VI can match F-VI's solution for $k \ge 4$.
  In contrast to the linear probabilistic example, an overparameterized inference function yields faster convergence as measured by the median over 10 seeds (\Cref{fig:iter_to_convergence}).
  However, A-VI is more sensitive to the seed and in some cases, can fail to converge after 20,000 iterations. A strength of F-VI relative to A-VI is therefore robustness to initialization, particularly in this example.
  Given the large number of Monte Carlo samples used to estimate the ELBO, we deduce A-VI is sensitive to the initialization.
  The choice $k=16$ produces a fast and reasonably stable algorithm.

\subsection{Bayesian neural network}

\begin{figure}
    \centering
    \includegraphics[width=6.5cm]{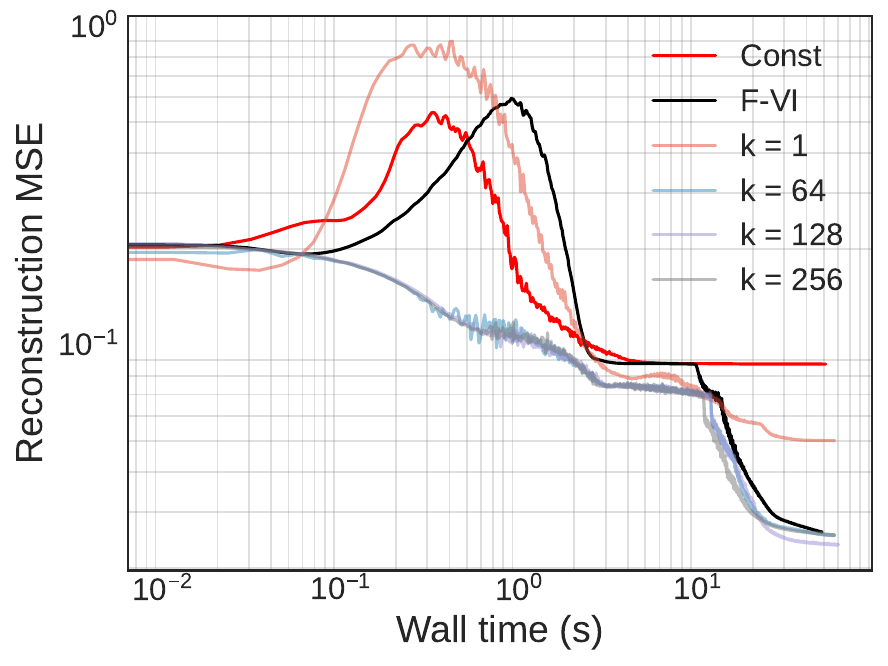}
    \caption{\textit{Image reconstruction error, as measured by MSE over pixel, for a trained Bayesian neural network.
    The MSE is not a one-to-one map with the ELBO.
    For a sufficiently expressive inference network, A-VI achieves the same error as F-VI and converges faster.
    The above provides the paths for a single seed; for results across several seeds, see Figure~\ref{fig:iter_to_convergence}.
    }}
    \label{fig:BNN_mse}
\end{figure}

Next we consider a deep generative model applied to the FashionMNIST data set \citep{Xiao:2017}.
We associate with each image $x_n \in \mathbb R^{784}$ a low-dimensional representation $z_n \in \mathbb R^{64}$.
The joint distribution is then,
\begin{align}
  p(\theta) & = \mathcal N(0, I) \nonumber \\
  p(z_n) & = \mathcal N(0, I) \nonumber \\
    p(x_n \mid z_n, \theta) & = \mathcal N (\Omega(z_n \s \theta), I),
\end{align}
where $\Omega$ is a neural network with two hidden layers of width 256 and a leaky ReLu activation function.
$\theta \in \mathbb R^{57,232}$ stores the weights and biases of the network.
This generative model underlies the traditional VAE, however we estimate a posterior over $\theta$ in order to confirm that A-VI can indeed close the amortization gap when fitting a Bayesian neural network.
We train the model on 10,000 images and at each iteration, evaluate the ELBO on a mini-batch of 1,000 images.
Hence a single epoch contains 10 iterations, and we run each VI algorithm for 5,000 epochs.
From a pilot run, we found estimating the ELBO with a single Monte Carlo sample worked reasonably well.

Due to the non-linear landscape of the optimization, we cannot guarantee that any of the algorithms converge.
However, we find that after 5,000 epochs, A-VI achieves the same ELBO as F-VI when using a width $k \ge 64$ for the inference network (\Cref{fig:optimization_paths}).
We also study the image reconstruction error measured by the mean squared error (MSE) over pixels on the training set.
(The MSE on the test set is not available for F-VI, however in Appendix~B we provide test error for A-VI.)
For this calculation, we use the Bayes estimator $\mathbb E(\theta \mid \mbx)$ and $\mathbb E (\mbz \mid \mbx)$.
\Cref{fig:BNN_mse} plots the MSE against wall time for the same seed used in \Cref{fig:optimization_paths}.
In \Cref{fig:iter_to_convergence}, we report the wall time required to achieve an MSE below 0.03, which corresponds to F-VI's best solution.
For $k \ge 64$, A-VI requires 2-3 times less iterations to converge, however each iteration is considerably more expensive.
As a result, the speed-up when examining wall-time is $\sim$25\%.
Overparameterizing the inference network slightly improves the convergence speed.

 \subsection{Saw time series} \label{sec:sawtime}
 In this final example, we explore the benefits of extending the domain of the inference function. 
 We simulate $N = 1,000$ observations from a saw time series (\cref{eq:saw_time}), with $x_0 = 0$ and
\begin{align}
  p(\theta) & = \mathcal N(0, 1) \nonumber \\
  p(z_0) & = \mathcal N(0, 1) \nonumber \\
  p(z_n \mid x_{n - 1}) & = \mathcal N(x_{n - 1}, 1) \s \nonumber \\
  p(x_n \mid z_n) & =  \mathcal N(\alpha (\theta + z_n), 1).
\end{align}
Once again, we fit an inference neural network  with two hidden layers of width $k$.
Additionally, we allow the network to either take in $x_n$ or $(x_{n - 1}, x_n)$ as its input. 
Only with the expanded output does A-VI attain F-VI's optimum for $k \ge 4$.
Using only $x_n$ produces a suboptimal approximation even with a comparatively large inference network (e.g. $k = 20$). 
Figure~\ref{fig:saw_time} demonstrates this behavior for one optimization path.
Across seed, we find that A-VI consistently outperforms F-VI (\Cref{fig:iter_to_convergence}).
While A-VI is sensitive to the seed for $k = 2$, the algorithm stabilizes once we overparameterize the inference network, with $k \ge 4$.

\section{DISCUSSION}

%For an inference function which takes in $x_n$, this means the latent variable model must be a simple hierarchical model.
%More generally, closing the gap between the two algorithms amounts to solving an amortized interpolation problem.
% For standard latent variable models, the interpolation problem is solvable if and only if $p(\theta, \mbz, \mbx)$ is a simple hierarchical model (Theorem~\ref{thm:learnable}).
% This class of models encompasses many canonical models, both in Bayesian statistics and in deep learning.

We studied amortized variational inference (A-VI) as a general method for posterior approximation.  We derived a necessary, sufficient, and verifiable condition on the model $p(\theta, \mbz, \mbx)$ under which A-VI can achieve the same optimal solution as factorized (or mean-field) variational inference (F-VI). These results establish that A-VI is a viable method for a large class of hierarchical models, including when doing a full Bayesian analysis rather than using a point estimator for the global parameter $\theta$. 

We then examined how to extend the domain of the inference function for models beyond the simple hierarchical model. We also established that there are some models for which the amortization gap cannot be closed, even after expanding the domain of the inference function and no matter how expressive the inference function is.
In such models, our results can provide justification for methods such as semi-amortized VI \citep{Kim:2018, Kim:2021}, in which A-VI is used to converge quickly to a suboptimal solution, which is then refined with F-VI.

% Our analysis on extending the domain of the inference function $f_\phi$ generalizes graphical arguments used for point estimation \citep{Girin:2021}. Conceptually, the domain of $f_\phi$ can be learned using the dependence of $z_n$, conditional on $\mbx$ \textit{and} $\theta$, even when we are marginalizing over on $\theta$ (full Bayes case) rather than conditioning on $\theta$ (point estimation case).
% This can be understood as a particular manifestation of partial pooling.
% %
% When the amortization gap can be closed, we empirically find that the number of variational parameters for A-VI does not need to scale with the number of observations $N$, unlike for F-VI.

% db : more discussio at marshall

There remain several open questions about amortized variational inference.

Even when the model admits an ideal inference function, a persistent question is how to choose the class of inference functions in order to close the amortization gap.
The ordering of the variational families $\mathcal Q_\text{A} \subset \mathcal Q_\text{F}$ suggests an informal diagnostic: after A-VI converges, run a few steps of F-VI and see if the solution improves.
If it does then the inference function may not be sufficiently expressive to close the gap.
It is also possible that the optimizer converged to a local optimum and that changing the variational objective allows the solution to improve.
On the other hand, for high-dimensional problems with highly non-convex landscapes, it may take many iterations before the solution improves, in which case the proposed diagnostic would not detect shortcomings in the inference function.

% This could also hint at other problems, and we should keep in mind that the variational objective is usually non-convex and so an optimizer may only find a local optimum.

% db : the comment about the local optimum is a little vague. "other problems"? "keep in mind"?

A related question: For a choice of the class of inference functions, how does A-VI change the optimization landscape relative to F-VI?
Our experimental results also raise the question of whether an overparameterized class of inference functions burdens the optimization, as seen for the linear probabilistic model, or improves the convergence rate, as illustrated in the Bayesian neural network and the saw time series.
Along similar lines, it is of interest to study the advantages and drawbacks of A-VI on more complex data sets than the ones we have considered, particular case for which convergence may not be achieved within a reasonable computational budget.

% the role of the class of inference functions. even when the model admits an ideal inference function, how expressive does the class need to be to close the amortization gap.  a related question: how does A-VI, including the choice of inference function, change the optimization landscape? in practice, when we do VI, we solve a difficult non-convex optimization problem to a local optimum. Our empirical studies here suggest that A-VI might improve the landscape othat optimization.  one diagnostic: run F-VI from A-VI's solution and see if the solution improves. and what about the role of overparameterization?

% a second open area is around prediction...

% A direction for future work is to better understand the optimization paths induced by different choices of inference functions.
% When does a more expressive inference class burden the optimization, as seen for the linear probabilistic model, and when does overparameterization lead to more efficient optimization, as in the case of the nonlinear probabilistic model?
% A more in-depth analysis could help us understand how to choose the class of inference functions, which remains an important design choice for A-VI.

% db : i don't love the word "believe" below (but i don't have an alternative).

\begin{figure}
 \center
 \includegraphics[width=5.5cm]{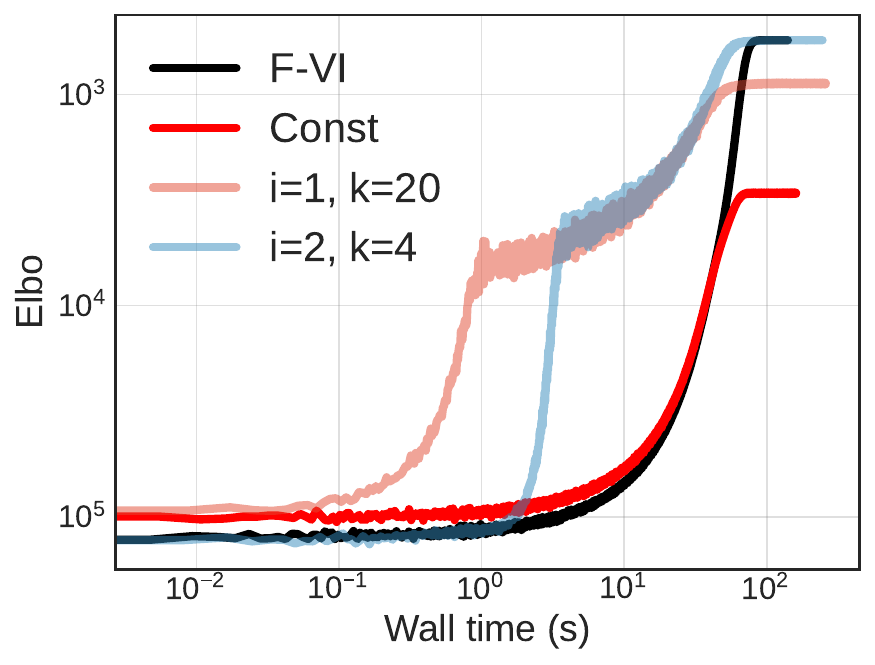}
 \caption{\textit{Optimization path for saw time series.
 An inference network which only takes in $x_n$ as its input ($i = 1$ case) cannot close the amortization gap, even when using a relatively large network.
 On other hand, a network which takes in $x_{n - 1}, x_n$ ($i = 2$ case) closes the gap with a relatively small network.}}
\label{fig:saw_time}
\end{figure}

Finally, how accurate is A-VI when applied to held-out data? We expect the \textit{generalization gap} \citep{Shu:2018, Ganguly:2022} can also be analyzed by setting up an implicit interpolation problem, this time with constraints to not overfit the data.
In a full Bayesian context, how can we understand the role of A-VI for online learning? In other words, how well can $f_\phi(x_{N + 1})$ approximate $p(z_n \mid \mbx, x_{N + 1})$?

\section{Acknowledgment}

We thank Lawrence Saul, Robert Gower, Justin Domke, and Ruben Ohana for helpful discussions, and Achille Nazaret for feedback on an early version manuscript.

\bibliography{ref.bib}

\begin{thebibliography}{27}
\providecommand{\natexlab}[1]{#1}
\providecommand{\url}[1]{\texttt{#1}}
\expandafter\ifx\csname urlstyle\endcsname\relax
  \providecommand{\doi}[1]{doi: #1}\else
  \providecommand{\doi}{doi: \begingroup \urlstyle{rm}\Url}\fi

\bibitem[Agrawal and Domke(2021)]{Agrawal:2021}
Abhinav Agrawal and Justin Domke.
\newblock Amortized variational inference in simple hierarchical models.
\newblock \emph{Neural Information Processing Systems}, 2021.

\bibitem[Bishop et~al.(2002)Bishop, Spiegelhalter, and Winn]{Bishop:2002}
Christopher~M Bishop, David Spiegelhalter, and John Winn.
\newblock Vibes: A variational inference engine for bayesian networks.
\newblock \emph{Neural Information Processing Systems}, 2002.

\bibitem[Blei(2012)]{Blei:2012}
David~M Blei.
\newblock Probabilistic topic models.
\newblock \emph{Communications of the ACM}, 55:\penalty0 77--84, 2012.

\bibitem[Blei et~al.(2017)Blei, Kucukelbir, and McAuliffe]{Blei:2017}
David~M. Blei, Alp Kucukelbir, and Jon~D. McAuliffe.
\newblock Variational inference: A review for statisticians.
\newblock \emph{Journal of the American Statistical Association}, 112, 2017.

\bibitem[Cremer et~al.(2018)Cremer, Li, and Duvenaud]{Cremer:2018}
Chris Cremer, Xuechen Li, and David Duvenaud.
\newblock Inference suboptimality in variational autoencoders.
\newblock \emph{International Conference of Machine Learning}, 2018.

\bibitem[Ganguly et~al.(2022)Ganguly, Jain, and Watchareeruetai]{Ganguly:2022}
Ankush Ganguly, Sanjana Jain, and Ukrit Watchareeruetai.
\newblock Amortized variational inference: Towards the mathematical foundation
  and review.
\newblock \emph{arXiv:2209.10888}, 2022.

\bibitem[Gelman et~al.(2013)Gelman, Carlin, Stern, Dunson, Vehtari, and
  Rubin]{Gelman:2013}
Andrew Gelman, John~B. Carlin, Hal~S. Stern, David~B. Dunson, Ark Vehtari, and
  Donald~B. Rubin.
\newblock \emph{{B}ayesian Data Analysis}.
\newblock Chapman \& Hall/CRC Texts in Statistical Science, 2013.

\bibitem[Gershman and Goodman(2014)]{Gershman:2014}
Samuel Gershman and Noah Goodman.
\newblock Amortized inference in probabilistic reasoning.
\newblock \emph{Proceedings of the Annual Meeting of the Cognitive Science
  Society}, 2014.

\bibitem[Giordano et~al.(2023)Giordano, Ingram, and Broderick]{Giordano:2023}
Ryan Giordano, Martin Ingram, and Tamara Broderick.
\newblock Black box variational inference with a deterministic objective:
  Faster, more accurate, and even more black box.
\newblock \emph{arXiv:2304.05527}, 2023.

\bibitem[Girin et~al.(2021)Girin, Leglaive, Diard, Hueber, and
  Alameda-Pineda]{Girin:2021}
Laurent Girin, Simon Leglaive, Julien Diard, Thomas Hueber, and Xavier
  Alameda-Pineda.
\newblock Dynamical variational autoencoders: A comprehensive review.
\newblock \emph{Foundations and Trends in Machine Learning}, 15:\penalty0
  1--175, 2021.

\bibitem[Hjelm et~al.(2016)Hjelm, Cho, Chung, Salakhutdinov, Calhoun, and
  Jojic]{Hjelm:2016}
R~Devon Hjelm, Kyunghyun Cho, Junyoung Chung, Russ Salakhutdinov, Vince
  Calhoun, and Nebojsa Jojic.
\newblock Iterative refinement of the approximate posterior for directed belief
  networks.
\newblock \emph{Neural Information Processing Systems}, 2016.

\bibitem[Jordan et~al.(1999)Jordan, Ghahramani, Jaakkola, and
  Saul]{Jordan:1999}
Michael~I. Jordan, Zoubin Ghahramani, Tommi~S. Jaakkola, and Lawrence~K. Saul.
\newblock An introduction to variational methods for graphical models.
\newblock \emph{Machine Learning}, 37:\penalty0 183 -- 233, 1999.

\bibitem[Kim and Pavlovic(2021)]{Kim:2021}
Minyoung Kim and Vladmir Pavlovic.
\newblock Reducing the amortization gap in variational autoencoders: A
  {B}ayesian random function approach.
\newblock \emph{arXiv:2102.03151}, 2021.

\bibitem[Kim et~al.(2018)Kim, Wiseman, Miller, Sontag, and Rush]{Kim:2018}
Yoon Kim, Sam Wiseman, Andrew~C Miller, David Sontag, and Alexander~M Rush.
\newblock Semi-amortized variational autoencoders.
\newblock \emph{International Conference on Machine Learning}, 2018.

\bibitem[Kingma and Ba(2015)]{Kingma:2015}
Diederik~P. Kingma and Jimmy Ba.
\newblock Adam: a method for stochastic optimization.
\newblock \emph{International Conference on Learning Representations}, 2015.

\bibitem[Kingma and Welling(2014)]{Kingma:2014}
Diederik~P. Kingma and Max Welling.
\newblock Auto-encoding variational {B}ayes.
\newblock \emph{Internation Conference on Learning Representations}, 2014.

\bibitem[Krishnan et~al.(2018)Krishnan, Liang, and Hoffman]{Krishnan:2018}
Rahul~G Krishnan, Dawn Liang, and Matthew~D Hoffman.
\newblock On the challenges of learning with inference networks on sparse,
  high-dimensional data.
\newblock \emph{Artificial Intelligence and Statistics}, 2018.

\bibitem[Margossian and Saul(2023)]{Margossian:2023}
Charles~C Margossian and Laurence~K Saul.
\newblock The shrinkage-delinkage trade-off: An analysis of factorized gaussian
  approximations for variational inference.
\newblock \emph{Uncertainty in Artificial Intelligence}, 2023.

\bibitem[Marino et~al.(2018)Marino, Yue, and Mandt]{Marino:2018}
Joseph Marino, Yisong Yue, and Stephan Mandt.
\newblock Iterative amortized inference.
\newblock \emph{International Conference on Machine Learning}, 2018.

\bibitem[Paszke et~al.(2019)Paszke, Gross, Massa, Lerer, Bradbury, Chanan,
  Killeen, Lin, Gimelshein, Antiga, Desmaison, Kopf, Yang, DeVito, Raison,
  Tejani, Chilamkurthy, Steiner, Fang, Bai, and Chintala]{Pytorch:2019}
Adam Paszke, Sam Gross, Francisco Massa, Adam Lerer, James Bradbury, Gregory
  Chanan, Trevor Killeen, Zeming Lin, Natalia Gimelshein, Luca Antiga, Alban
  Desmaison, Andreas Kopf, Edward Yang, Zachary DeVito, Martin Raison, Alykhan
  Tejani, Sasank Chilamkurthy, Benoit Steiner, Lu~Fang, Junjie Bai, and Soumith
  Chintala.
\newblock Pytorch: An imperative style, high-performance deep learning library.
\newblock \emph{Neural Information Processing Systems}, 2019.

\bibitem[Rezende et~al.(2014)Rezende, Mohamed, and Wierstra]{Rezende:2014a}
D.~Rezende, S.~Mohamed, and D.~Wierstra.
\newblock Stochastic backpropagation and approximate inference in deep
  generative models.
\newblock \emph{International Conference on Machine Learning}, 2014.

\bibitem[Shu et~al.(2018)Shu, Bui, Zhao, Kochenderfer, and Ermon]{Shu:2018}
Rui Shu, Hung~H. Bui, Shengjia Zhao, Mykel~J. Kochenderfer, and Stefano Ermon.
\newblock Amortized inference regularization.
\newblock \emph{Neural Information Processing Systems}, 2018.

\bibitem[Srivastava and Sutton(2017)]{Srivastava:2017}
Akash Srivastava and Charles Sutton.
\newblock Autoencoding variational inference for topic models.
\newblock \emph{International Conference on Learning Representations}, 2017.

\bibitem[Tomczak(2022)]{Tomczak:2022}
Jakub~M Tomczak.
\newblock \emph{Deep Generative Modeling}.
\newblock Springer, 2022.

\bibitem[Turner and Sahani(2011)]{Turner:2011}
Richard~E. Turner and Maneesh Sahani.
\newblock Two problems with variational expectation maximisation for
  time-series models.
\newblock In David Barber, A.~Taylan Cemgil, and Silvia Chiappa, editors,
  \emph{{B}ayesian Time series models}, chapter~5, pages 109--130. Cambridge
  University Press, 2011.

\bibitem[Webb et~al.(2018)Webb, Golinski, Zinkov, Siddharth, Whye, and
  Wood]{Webb:2018}
Stefan Webb, Adam Golinski, Robert Zinkov, N~Siddharth, Yee Whye, and Frank
  Wood.
\newblock Faithful inversion of generative models for effective amortized
  inference.
\newblock \emph{Neural Information Processing Systems}, 2018.

\bibitem[Xiao et~al.(2017)Xiao, Rasul, and Vollgraf]{Xiao:2017}
Han Xiao, Kashif Rasul, and Roland Vollgraf.
\newblock {FashionMNIST}: a novel image dataset for benchmarking machine
  learning algorithms.
\newblock \emph{arXiv:1708.07747}, 2017.

\end{thebibliography}

\onecolumn

\appendix

\section{MISSING PROOFS}

We provide proofs for all statements in \S3.

\subsection{CAVI rule}

Lemma~3.1 follows from the coordinate-ascent VI update rule for F-VI \cite[Eq. 17]{Blei:2017}, which tells us how to choose $q(z_n \s \nu_n)$ to minimize the KL-divergence, while maintaining the other factors in the approximating distribution fixed.
Specifically, suppose $\nu_0$ and $\boldsymbol \nu_{-n}$ are fixed.
Then the optimal variational parameter $\nu^\star_n$ for $n^\text{th}$ factor verifies
\begin{equation}
  q(z_n \s \nu_n^\star) \propto \exp \left \{
    \EE{q(\theta \s \nu_0)}{
      \EE{q(\mbz_{-n} \s \nu)}{
        \log p(\theta, \mbz, \mbx)}} \right \}.
\end{equation}
We now apply this rule to the optimal solution, i.e. we set $\nu_0 = \nu_0^*$ and $\boldsymbol \nu_{-n} = \boldsymbol \nu^*_{-n}$.
Then, minimizing the KL-divergence, $\nu_n^\star = \nu^*_n$ and the desired result follows.
\qed

\subsection{Existence of an ideal inference function and simple hierarchical models}

Theorem 3.4 states that the existence of an ideal inference function for a standard latent variable model (Definition~3.2) is, in general, equivalent to $p(\theta, \mbz, \mbx)$ being a simple hierarchical model (Eq.~1).

We first prove item (1). Suppose $p(\theta, \mbz, \mbx)$ is a simple hierarchical model.
Applying the CAVI rule (Lemma 3.1) to Eq. 1,
  \begin{align*}
    q(z_n \s \nu^*) & \propto  \exp \left \{ \mathbb E_{q(\theta \s \nu_0^*)} \left [ \mathbb E_{q(\mbz_{-n} \s \nu^*)} \left [ \log p({\theta}) + \sum_{j = 1}^n \log p(z_j \mid \theta) + \log p(x_j \mid z_j, \theta) \right ] \right ] \right \} \\
             & \propto  \exp \left \{ \mathbb E_{q(\theta \s \nu_0^*)} \left [ \mathbb E_{q(\mbz_{-n} \s \nu^*)} \left [ \log p(z_n \mid \theta) + \log p(x_n \mid z_n, \theta) \right ] \right ]  \right \}  \\
             & \propto \exp \left \{ \mathbb E_{q(\theta \s \nu_0^*)} \left [\log p(z_n \mid \theta) + \log p(x_n \mid z_n, \theta \right ] \right \}.
  \end{align*}
Then
\begin{equation}  \label{eq:q-simple-hier}
  q(z_n \s \nu^*) = k_{\mbx} (x_n) \int_\Theta q(\theta \s \nu^*_0(\mbx)) \log p(z_n \mid \theta) + \log p(x_n \mid z_n, \theta) \text d \theta,
\end{equation}
where $k_{\mbx} (x_n) = \left [\int_{\mathcal Z} \int_\Theta q(\theta \s \nu^*_0(\mbx)) \log p(z_n \mid \theta) + \log p(x_n \mid z_n, \theta) \text d \theta \text d z_n \right]^{-1}$ is a normalizing constant.
The R.H.S of \Cref{eq:q-simple-hier} defines an ideal inference function $f_{\mbx} (x_n)$, in the sense that, given $\mbx$, we have $x_n = x_m \implies f_{\mbx}(x_n) = f_{\mbx}(x_m)$.

Next we prove the converse, which is item (2) of Theorem~3.3.
Applying the CAVI rule to a standard latent variable model,
\begin{align}  \label{eq:cavi-zn}
  q(z_n \s \nu^*) & \propto \exp \left \{ \mathbb E_{q(\theta, \mbz_{-n} \s \boldsymbol \nu^*_{-n})} \log p(\theta, \mbz, \mbx) \right \} \nonumber \\
  & \propto \exp \left \{ \mathbb E_{q(\theta, \mbz_{-n} \s \boldsymbol \nu^*_{-n})} \log p(z_n \mid \mbz_{-n}, \theta) + \log p(x_n \mid z_n, \mbz_{-n}, \theta) + \sum_{i \neq n} \log p(x_i \mid z_n, \mbz_{-n}, \theta)  \right \}.
\end{align}
The last equation highlights all the terms in which $z_n$ appears.
Furthermore, we used the property of \textit{conditional independence} (Definition~3.2 (ii)) to break up the log likelihood $\log p(\mbx \mid \mbz, \theta)$ into a sum.

Suppose now that there exists a graph $\mathcal G$, such that for \textit{any} standard latent variable model supported by this graph, there exists an ideal inference function, that is $\nu^*_n = f_{\mbx} (x_n)$.
Because $q$ is parametric, we have that the R.H.S of \Cref{eq:cavi-zn} is also a (dataset dependent) function of $x_n$.
For this assumption to hold \textit{for any choice of distribution}, any contribution of $x_{i \neq n}$ that is not common to all the variational factors of $q(\mbz)$ must be absorbed into the normalizing constant and effectively vanish.
We will complete the proof by removing unique contributions of $x_i$ and severing offending edges in $\mathcal G$ (Figure~\ref{fig:graphG}).

\begin{figure}
      \centering
      \begin{tikzpicture}
    [
      Empty/.style={circle, draw=white!, fill=green!0, thick, minimum size=1mm},
      Round/.style={circle, draw=black!, fill=green!0, thick, minimum size=10mm},
    ]

    % Simple hierarchical
    \node[Round] (z_1) at (-1, 0){$z_{n - 1}$};
    \node[Round] (z_2) at (3, 0){$z_n$};
    \node[Round] (x_1) at (-1, -1.5){$x_{n - 1}$};
    \node[Round] (x_2) at (3, -1.5){$x_n$};
    \node[Round] (theta) at (1, -3) {$\theta$};

    \path[->, draw] (z_1) -- (x_1);
    \path[->, draw] (z_2) -- (x_2);
    \path[->, draw] (theta) -- (x_1);
    \path[->, draw] (theta) -- (x_2);
    \path[->, draw] (theta) -- (z_1);
    \path[->, draw] (theta) -- (z_2);
    
    \path[->, draw, dashed] (z_1) -- (z_2);
    \path[->, draw, dashed] (z_1) -- (x_2);
    \path[->, draw, dashed] (z_2) -- (x_1);

    \end{tikzpicture}
    \caption{
    \textit{Graphical representation of a standard latent variable model.
    If present, the dotted edges preclude the existence of an ideal inference function $f_{\mbx} (x_n) = \nu^*_n$ and the amortization gap cannot be closed.
    }}
    \label{fig:graphG}
  \end{figure}
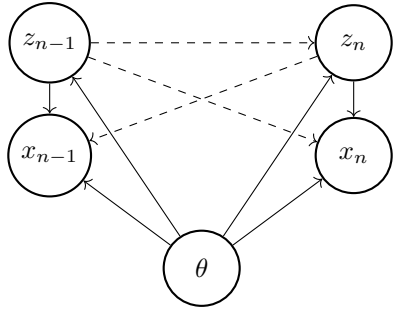

The most obvious contribution of $x_i$ appears in the likelihood terms and is removed if and only if we exclude non-local dependence, that is for $i \neq n$, $p(x_i \mid z_n, \mbz_{-n}, \theta) = p(x_i \mid \mbz_{-n}, \theta)$.
Doing so for every $n$, we have
\begin{equation} \label{eq:nonlocal-independence}
  p(x_i \mid z_n, \mbz_{-n}, \theta) = p(x_i \mid z_i, \theta).
\end{equation}

\begin{remark}
  Here the assumption of \textit{local dependence} (Definition~3.2 (i)) is critical.
  Without it, we cannot exclude the possibility that $x_i$ does not depend on $z_i$, or any $z_j$'s other than $z_n$, and hence that $p(x_i \mid z_n, \mbz_{-n}, \theta) = p(x_i \mid z_n, \theta)$, $i \neq n$.
  Then an edge between $z_n$ and $x_i$ would not contradict the existence of an ideal inference function.
\end{remark}

Next, we have by assumption that $\nu^*_i = f_\mbx(x_i)$.
Then
\begin{equation}
  q(z_n \s \nu^*) \propto \exp \left \{ \int_{\Theta, {\bf \mathcal Z}_{-n}} q(\text d \theta \s \nu_0 (\mbx)) \prod_{i \neq n} q(\text d z_i \s f_\mbx(x_i)) \log p(z_n \mid \mbz_{-n}, \theta) + \log p(x_n \mid z_n, \theta) \right \}.
\end{equation}
The offending terms are now the variational factors $q(\text d z_i \s f_\mbx(x_i))$ in the integral.
To remove them, we must get rid of any term that couples $z_n$ and $z_i$, and so $z_n$ must be a priori independent of $z_i$, that is
\begin{equation} \label{eq:apriori-independence}
  p(z_n \mid \mbz_{-n}, \theta) = p(z_n \mid \theta).
\end{equation} 
A standard latent variable model that verifies \Cref{eq:nonlocal-independence} and \Cref{eq:apriori-independence} must also verify Eq.~1 and is therefore a simple hierarchical model. \qed

\subsection{Example of a latent variable model, which is not a simple hierarchical model and admits an ideal inference function}

The statement of Theorem~3.4, item (ii) is carefully written for all distributions supported on a graph.
To see why a simple ``if and only if'' version of item (i) is not true, consider a dense hierarchical model, with edges between all elements of $\mbx$ and $\mbz$.
If we a choose a likelihood which is symmetric in ${\bf z}$, e.g. $p(x_n \mid \mbz, \theta) = p(x_n \mid \sum_n z_n, \theta)$, then there exists a (constant) ideal inference function and moreover, all factors $q(z_n \s \nu^*_n)$ are identical.

This case is of course trivial: with such a symmetry, the notion of a local latent variable is unjustified.
To our knowledge, all examples of models, which are not simple hierarchical models and still admit an ideal inference function, rely on a similar trivialities.
These however constitute edge cases we must be mindful of when writing formal statements.

\subsection{Analytical results for the linear probabilistic model}

We prove Proposition~3.6, which provides an exact expression for the mean and variance of $q(z_n \s \nu^*)$, the optimal solution returned by F-VI when applied to the linear generative model.
In the model of interest, $\theta$ is a scalar random variable, and we introduce the fixed standard deviations, $\tau \in \mathbb R$ and $\sigma \in \mathbb R$.
Next
\begin{equation}
    p(\theta) \propto 1; \ \ p(z_n) = \mathcal N(0, 1); \ \ p(x_n) = \mathcal N(\theta + \tau z_n, \sigma).
\end{equation}
Since the posterior distribution $p(\theta, {\bf z} \mid {\bf x})$ is normal, $q(z_n \s \nu^*)$ can be worked out analytically \citep[e.g][]{Turner:2011, Margossian:2023}.
Specifically,
\begin{equation}
    q(z_n \s \nu_n^*) = \mathcal N \left(\mu_n, \frac{1}{[\Sigma^{-1}]_{nn}} \right),
\end{equation}
where $\mu_n$ is the correct posterior mean for $z_n$ and $\Sigma$ is the correct posterior covariance matrix.
Note that F-VI always underestimates the posterior marginal variance unless $\Sigma$ is diagonal \citep[][Theorem 3.1]{Margossian:2023}.
It remains to find an analytical expression for the posterior distribution.
\begin{lemma}
  The marginal posterior distribution is given by
  \begin{equation}
      p(z_n \mid {\bf x}) = \mathcal N \left (\frac{\tau}{\sigma^2 + \tau^2} (x_n - \bar x), s \right),
  \end{equation}
  for some $s$, constant with respect to ${\bf x}$.
\end{lemma}
\begin{proof}
  From Bayes' rule
  \begin{eqnarray}  \label{eq:joint_quadratic}
    \log p({\bf z}, \theta \mid {\bf x}) & = & k - \frac{1}{2} \sum_{n = 1}^N z_n^2 - \frac{1}{2 \sigma^2} \sum_{n = 1}^N (x_n - \theta - \tau z_n)^2 \nonumber \\
    & = & k - \frac{1}{2} \sum_{n = 1}^N z_n^2 - \frac{1}{2 \sigma^2} \sum_{n = 1}^N \theta^2 + (x - \tau z_n)^2 - 2 \theta (x_n - \tau z_n) \nonumber \\
    & = & k - \frac{1}{2} \sum_{n = 1}^N z_n^2 - \frac{1}{2 \sigma^2} \left ( n \theta^2 + \sum_{n = 1}^N (x_n - \tau z_n)^2 - 2 \theta \sum_{n = 1}^N (x_n - \tau z_n) \right),
  \end{eqnarray}
  where $k$ is a constant with respect to ${\bf z}$ and $\theta$.
  Moving forward, we overload the notation for $k$ to designate any such constant.
  As expected, Eq.~\ref{eq:joint_quadratic} is quadratic in $\theta$ and ${\bf z}$.

  \begin{remark}
  At this point, the proof may take two directions: in one, we work out the precision matrix, $\Phi$ (i.e. the inverse covariance matrix $\Sigma$) for $p({\bf z}, \theta \mid {\bf x})$ and invert it to obtain the posterior mean for each $z_n$.
  Constructing $\Phi$ is straightforward and necessary to show the covariance of $q(z_n \s \nu^*_n)$ is constant with respect to ${\bf x}$.
  However, inverting $\Phi$ requires recursively applying the Sherman-Morrison formula three times, which is algebraically tedious.
  The other direction is to marginalize out $\theta$.
  We can then construct the precision matrix $\Psi$ for $p({\bf z} \mid {\bf x})$, which only requires a single application of the Sherman-Morrison formula to invert.
  We opt for the second direction, noting both options are rather involved.
  \end{remark}
  
  To marginalize out $\theta$, we complete the square and perform a Gaussian integral,
  \begin{eqnarray}
    \log p({\bf z}, \theta \mid {\bf x})& = &
    k - \frac{1}{2} \sum_{n = 1}^N z_n^2 - \frac{n}{2 \sigma^2} \left [ \theta^2 + \frac{1}{n} \sum_{n = 1}^N (x_n - \tau z_n)^2 - 2 \theta \sum_{n = 1}^N (x_n - \tau z_n) \right . \nonumber \\
    & & + \left . \left (\frac{1}{n} \sum_{n = 1}^N (x_n - \tau z_n) \right)^2 - \left (\frac{1}{n} \sum_{n = 1}^N (x_n - \tau z_n) \right)^2 \right] \nonumber \\
    & = &  k - \frac{1}{2} \sum_{n = 1}^N z_n^2 - \frac{n}{2 \sigma^2} \left [ \left (\theta - \frac{1}{n} \sum_{n = 1}^N (x_n - \tau z_n) \right)^2 + \frac{1}{n} \sum_{n = 1}^N (x_n - \tau z_n)^2 \right . \nonumber \\ 
    & & - \left . \left (\frac{1}{n} \sum_{n = 1}^N (x_n - \tau z_n) \right)^2 \right ]
  \end{eqnarray}
  Then
  \begin{eqnarray}
    \log p({\bf z} \mid {\bf x}) = k - \frac{1}{2} \sum_{n = 1}^N z_n^2 - \frac{1}{2 \sigma^2} \left [ \sum_{n = 1}^N (x_n - \tau z_n)^2 - \frac{1}{n} \left ( \sum_{n = 1}^N (x_n - \tau z_n) \right)^2 \right].
  \end{eqnarray}
  %
  % At this point, we may recognize the quadratic form in the exponential, expected from a Gaussian distribution.
  % The squared sum term couples the elements of ${\bf z}$, meaning the $z_n$'s are not \textit{a posteriori} independent.
  Expanding the square,
  \begin{equation}
    \left ( \sum_{n = 1}^N (x_n - \tau z_n) \right)^2 = \sum_{n = 1}^N (x_n - \tau z_n)^2 + 2 \sum_{j < n} (x_n - \tau z_n) (x_j - \tau z_j).
  \end{equation}
  Plugging this in and factoring out $\tau$, we get
  \begin{equation}
      \log p({\bf z} \mid {\bf x}) = k - \frac{1}{2} \sum_{n = 1}^N z_n^2 - \frac{\tau^2}{2 \sigma^2} \left [ \sum_{n = 1}^N \left (1 - \frac{1}{n} \right) \left (\frac{x_n}{\tau} - z_n \right)^2 - \frac{2}{n} \sum_{j < n} \left( \frac{x_n}{\tau} - z_n \right) \left (\frac{x_j}{\tau} - z_j \right) \right ].
  \end{equation}
  Now the standard expression for a multivariate Gaussian is
  \begin{equation}
    \log p({\bf z} \mid {\bf x}) = k - \frac{1}{2} ({\bf z} - \boldsymbol \mu)^T \Psi ({\bf z} - \boldsymbol \mu) = k -\frac{1}{2} \left (\sum_{n = 1}^N \Psi_{nn} (z_n - \mu_n)^2 + 2 \sum_{j < n} \Psi_{jn} (z_n - \mu_n)(z_j - \mu_j) \right),
  \end{equation}
  where $\boldsymbol \mu$ is the mean and $\Psi$ the precision matrix.
  We solve for the mean and precision matrix by matching the coefficients in the above two expressions for $z_n$, $z_n z_j$, and $z_n^2$, which respectively produce the following equations:
  \begin{align}
    \sum_{j = 1}^N \Psi_{nj} \mu_j & = \frac{\tau}{\sigma^2} (x_n - \bar x) \label{eq:posterior_mean} \\
    \Psi_{nj} & = - \frac{\tau^2}{n \sigma^2}, \ \ \forall n \neq j  \\
    \Psi_{nn} & = 1 + \frac{\tau^2}{\sigma^2} \left (1 - \frac{1}{N} \right).
  \end{align}
  This immediately gives us the precision matrix.
  Eq.~\ref{eq:posterior_mean} may be rewritten in matrix form as
  \begin{equation}
      \boldsymbol \mu = \frac{\tau}{\sigma^2} \Psi^{-1} [{\bf x} - \bar x {\bf 1}],
  \end{equation}
  where ${\bf 1}$ is the $N$-vector of 1's.
  Let $\alpha = \Psi_{nj}$, for any $n \neq j$, and $\beta = \Psi_{nn} - \alpha$.
  Then
  \begin{equation}
      \Psi = \beta I + \alpha {\bf 1 1}^T,
  \end{equation}
  Applying the Sherman-Morrison formula, we obtain the covariance matrix,
  \begin{eqnarray}
      \Psi^{-1} & = & (\beta I + \alpha {\bf 1 1}^T)^{-1} \nonumber \\
        & = & \beta^{-1} I - \frac{\beta^{-1} I \alpha {\bf 1 1}^T \beta^{-1} I}{1 + \alpha {\bf 1}^T \beta^{-1} I {\bf 1}} \nonumber \\
        & = & \beta^{-1} I - \frac{\alpha \beta^{-1}}{\beta + N \alpha} {\bf 11}^T.
  \end{eqnarray}
  Notice that $\Psi^{-1}$ does not depend on ${\bf x}$ and that it's diagonal elements are all equal.
  Moreover $(\Psi^{-1})_{nn}$ gives us the constant, $s$.
  Next let
  \begin{equation}
      a = \beta^{-1} \frac{\tau}{\sigma^2}; \ \ \ \ b = - \frac{\alpha \beta^{-1}}{\beta + N \alpha} \frac{\tau}{\sigma^2}.
  \end{equation}
  Then $\boldsymbol \mu = (a I + b {\bf 1 1}^T) [{\bf x} - \bar x {\bf 1 1}^T]$ and moreover
  \begin{eqnarray*}
      \mu_n & = & a (x_n - \bar x) + b \sum_{j = 1}^N x_j - \bar x \\
      & = & a (x_n - \bar x) \\
      & = & \frac{\tau}{\sigma^2}\left (\frac{\tau^2 + \sigma^2}{\sigma^2} \right)^{-1} (x_n - \bar x) \\
      & = & \frac{\tau}{\sigma^2 + \tau^2}(x_n - \bar x),
  \end{eqnarray*}
  as desired.
  
\end{proof}

To complete the proof of Proposition~3.4, we need to show that the variances of $q(z_n \s \nu^*)$ is constant with respect to ${\bf x}$; that they are equal for each $z_n$ follows from the symmetry of the problem.
We already constructed the precision matrix $\Psi$ for $p({\bf z} \mid {\bf x})$, but we actually need to study the full precision matrix $\Phi$ of $p(\theta, {\bf z} \mid {\bf x})$.
We use the index $0$ to denote the columns (or rows) corresponding to $\theta$.

\begin{lemma}
    The posterior precision matrix $\Phi$ of $p(\theta, {\bf z} \mid {\bf x})$ verfies
    \begin{equation}
        \Phi_{00} = \frac{N}{\sigma^2}; \ \
        \Phi_{0j} = \frac{\tau}{2 \sigma^2} \ \mathrm{if} \ j > 0; \ \
        \Phi_{nn} = 1 + \frac{\tau^2}{\sigma^2} \ \mathrm{if} \ i > 0; \ \
        \Phi_{nj} = 0, \ \mathrm{if} \ n \neq j.
    \end{equation}
    Crucially, $\Phi$ is constant with respect to ${\bf x}$.
\end{lemma}
\begin{proof}
    Consider Eq.~\ref{eq:joint_quadratic}, rewritten here for convenience,
    \begin{equation*}
        \log p({\bf z}, \theta \mid {\bf x}) = k - \frac{1}{2} \sum_{n = 1}^N z_n^2 - \frac{1}{2 \sigma^2} \left ( N \theta^2 + \sum_{n = 1}^N (x_n - \tau z_n)^2 - 2 \theta \sum_{n = 1}^N (x_n - \tau z_n) \right).
    \end{equation*}
    The standard Gaussian form is
    \begin{eqnarray} \label{eq:Gaussian}
      \log p({\bf z}, \theta \mid {\bf x}) & = & k - \frac{1}{2} \left [\Phi_{00} (\theta - \nu)^2 + \sum_{n = 1}^N \Phi_{nn} (z_n - \mu_n)^2 \right.  \nonumber \\ 
      & &\left . + 2 \left ( \sum_{j = 1}^N \Phi_{0j} (\theta - \nu)(z_j - \mu_j) + \sum_{j < n} \Phi_{nj} (z_n - \mu_n)(z_j - \mu_j) \right) \right].
    \end{eqnarray}
    Matching coefficients for $\theta^2$, $\theta z_j$, $z_n z_j$ and $z_n^2$, we obtain respectively
        \begin{equation*}
        \Phi_{00} = \frac{N}{\sigma^2}; \ \
        \Phi_{0j} = \frac{\tau}{2 \sigma^2} \ \mathrm{if} \ j > 0; \ \
        \Phi_{nn} = 1 + \frac{\tau^2}{\sigma^2} \ \mathrm{if} \ n > 0; \ \
        \Phi_{nj} = 0, \ \mathrm{if} \ n \neq j.
    \end{equation*}
\end{proof}

    The variance of $q(z_n \s \nu^*)$ is obtained by inverting the diagonal elements of $\Phi$.
    By symmetry, \mbox{$\text{Var}_{q^*}(z_n) = \xi \ \ \forall n$}, where $\xi$ is a constant which does not depend on ${\bf x}$.
    This completes the proof of Proposition~3.4. \qed

  \subsection{Non-existence of an ideal inference function for hidden Markov models}

%  Before stating the proof, let us quickly examine why the proof used for the simple hierarchical model does not work here. This time when applying the CAVI rule (Lemma 3.1) to the hidden Markov model, we get
%    {\small
%    \begin{equation}
%        q(z_n \s \nu^*_n) \propto \exp \left \{ \mathbb E_{q(\theta \s \nu_0^*)} \log p(x_n \mid z_n, \theta) + \mathbb E_{q(z_{n - 1} \s \nu^*)} \log p(z_n \mid z_{n - 1}) + \mathbb E_{q(z_{n + 1} \s \nu^*)} \log p(z_{n + 1} \mid z_n) \right\}.
%    \end{equation}
%    }
%    %
%    Because the prior on $\mbz$ does not factorize, we pick up two additional terms through $p(z_n \mid z_{n - 1})$ and $p(z_{n + 1} \mid z_n)$, which prevents us from finishing the proof as we did in Section~\ref{app:simple-hier}. Nonetheless we should be mindful that the above does not immediately give us counter-example in which a learnable inference function does not exist.
%
%    Now to the proof. 
    To prove Proposition~3.8, we construct an example for which the optimal F-VI solution, using a factorized Gaussian approximation, can be written in a nearly closed form, and show that the optimal variational factors $\nu^*_n$ take different values even when all the values of $\mbx$ are equal.
    Then for any strict subset ${\bf w}_n \in \mbx$, we have ${\bf w}_n = {\bf w}_m$ but $\nu^*_n \neq \nu^*_m$.
    This provides our counter-example.

    Consider the model
    \begin{equation} \label{eq:hmm-simple}
      p(z_0) \propto 1 \s p(z_n \mid z_{n - 1}) = \mathcal N(z_{n - 1}, 1) \s p(x_n \mid z_n) = \mathcal N(z_n, 1),
    \end{equation}
    where $\theta$ is held fixed, say to a point estimate $\hat \theta$, and ignored for the rest of this analysis.
    Applying Bayes' rule and expanding
    \begin{eqnarray*}
      \log p(\mbz \mid \mbx) & = & k - \frac{1}{2} \sum_{n = 1}^N (z_n - z_{n - 1})^2 + (x_n - z_n)^2 \\
      & = & - \frac{1}{2} \sum_{n = 1}^N 2 z_n^2 + z_{n - 1}^2 - 2 x_n z_n - 2 z_n z_{n - 1},
    \end{eqnarray*}
    which is a quadratic form in $\mbz$ and hence a Gaussian.
    Matching the coefficients for $z_n$, $z_n z_j$ and $z_n^2$ to the standard expression for a multivariate Gaussian (Eq.~\ref{eq:Gaussian}), we get
    \begin{eqnarray}
        \sum_{j = 1}^N \Psi_{nj} \mu_j & = - 2 x_n  \\
        \Psi_{n j} & = - 2 & \text{if} \ j = n - 1 \ \text{or} \ j = n + 1 \\
        \Psi_{nn} & = 3 & \text{if} \ n \ge 1 \\
        \Psi_{00} & = 1.
    \end{eqnarray}
    All non-specified elements of $\Psi$ go to 0.
    Moreover the precision matrix $\Psi$ is tri-diagonal.
    The posterior mean solves the linear problem,
    \begin{equation}
      \boldsymbol \mu = - 2 \Psi^{-1} \mbx.
    \end{equation}
    Since the variational family and the target are both Gaussian, the optimal variational mean is simply the posterior mean and $\nu^* = \boldsymbol \mu$. Even though the elements of $\mbx$ are all equal, it is in general not the case that the elements of $\nu^*$ are constant.
    To see this explicitly, we take $N = 100$ and $x_1 = x_2 = \cdots = x_N = 1$, and find that the elements of $\nu^*$ are indeed distinct (Figure~\ref{fig:nu-hmm}).
    This shows that there exists a hidden Markov model and a realization of the data ${\bf x}$ such that no learnable inference function exists. \qed 

    \begin{figure}
      \centering
      \includegraphics[width=5in]{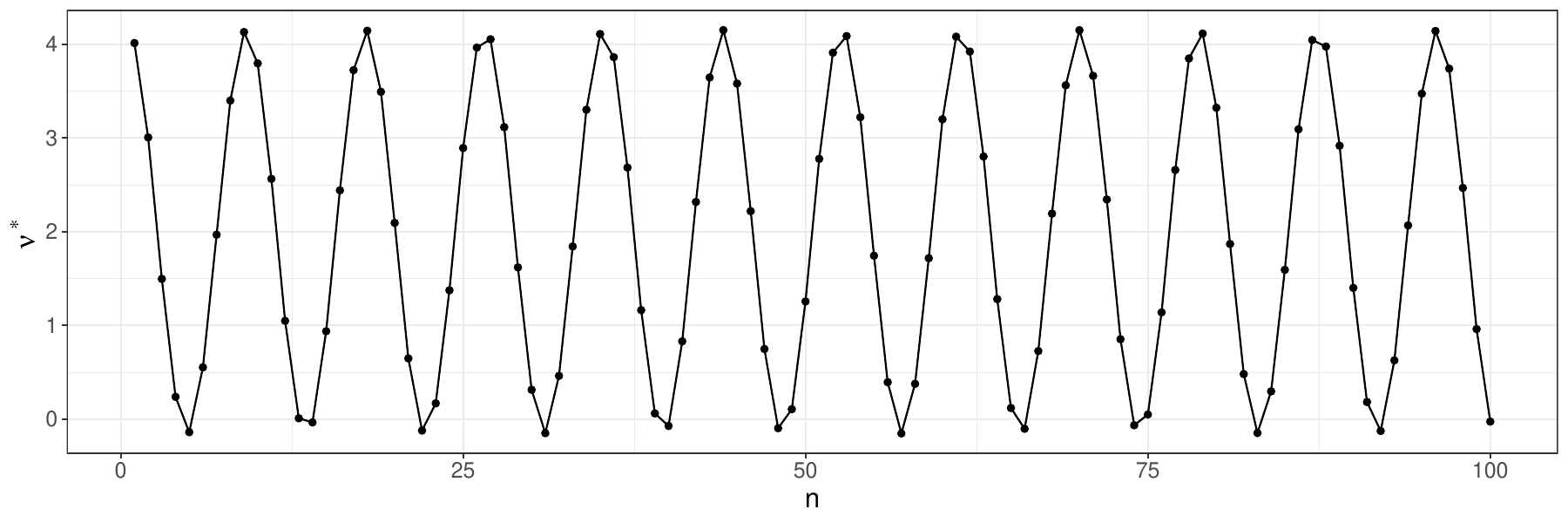}
      \caption{\textit{Optimal variational means when using a Gaussian F-VI on a hidden Markov model (\Cref{eq:hmm-simple}). Even though the elements of ${\bf x}$ are all equal, the optimal variational means take on different values and so no inference function $f_\phi:\bf w_n \to \nu^*_n$ can be constructed, for any subset $\bf w_n \in \bf x$}.}
      \label{fig:nu-hmm}
    \end{figure}

    \section{ADDITIONAL EXPERIMENTAL RESULTS}

    {\bf Hardware.} All experiments are conducted in \texttt{Python} 3.9.15 with \texttt{PyTorch} 1.13.1 and \texttt{CUDA} 12.0 using an NVIDIA RTX A6000 GPU.

    {\bf Reconstruction error on test set for Bayesian neural network.} We consider the reconstruction error on a test set of 10,000 images (\Cref{fig:mse-test}).
    The reconstructed image is obtained by (i) computing $q(z' \mid x')$ using the inference function $f_\phi$ and (ii) feeding $\mathbb E_q(z' \mid x')$ into the likelihood neural network $\Omega$ (in the VAE context, the ``decoder'') to obtain $\hat x'$.
    $\Omega$ is evaluated at the Bayes estimator $\hat \theta = \mathbb E_q(\theta \mid \mbx)$.
    F-VI provides no automatic way of doing step (i) (one would need to learn $q(z' \s \nu')$ by running F-VI from scratch), and so we do not evaluate it on the test set.
    Overall, we find the model generalizes well, and the test error is very close to the training error.

    \begin{figure}
        \centering
        \includegraphics[width=3in]{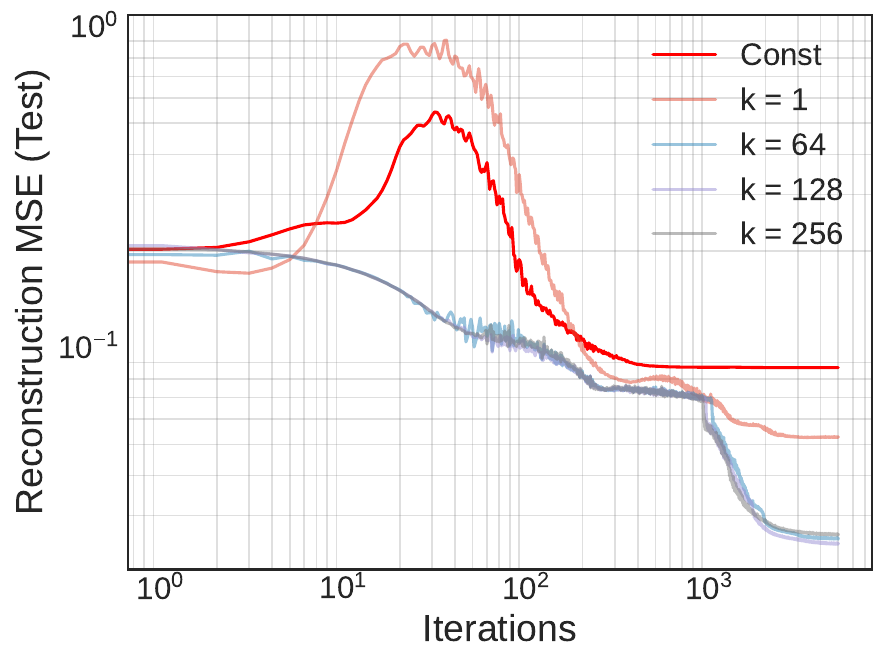}
        \caption{\textit{Reconstruction MSE on a test set.}}
        \label{fig:mse-test}
    \end{figure}

    % All experiments are conducted in \texttt{Python} 3.9.15 with \texttt{PyTorch} 1.13.1 and \texttt{CUDA} 12.0 using an NVIDIA RTX A6000 GPU.

\end{document}

% --- supplement: uai_supplement.tex ---

\maketitle

\appendix

\section{MISSING PROOFS}

We provide proofs for all statements in \S3.

\subsection{CAVI rule}

Lemma~3.1 follows from the coordinate-ascent VI update rule for F-VI \cite[Eq. 17]{Blei:2017}, which tells us how to choose $q(z_n \s \nu_n)$ to minimize the KL-divergence, while maintaining the other factors in the approximating distribution fixed.
Specifically, suppose $\nu_0$ and $\boldsymbol \nu_{-n}$ are fixed.
Then the optimal variational parameter $\nu^\star_n$ for $n^\text{th}$ factor verifies
%
\begin{equation}
  q(z_n \s \nu_n^\star) \propto \exp \left \{
    \EE{q(\theta \s \nu_0)}{
      \EE{q(\mbz_{-n} \s \nu)}{
        \log p(\theta, \mbz, \mbx)}} \right \}.
\end{equation}
%
We now apply this rule to the optimal solution, i.e. we set $\nu_0 = \nu_0^*$ and $\boldsymbol \nu_{-n} = \boldsymbol \nu^*_{-n}$.
Then, minimizing the KL-divergence, $\nu_n^\star = \nu^*_n$ and the desired result follows.
\qed

\subsection{Existence of an ideal inference function and simple hierarchical models}

Theorem 3.4 states that the existence of an ideal inference function for a standard latent variable model (Definition~3.2) is, in general, equivalent to $p(\theta, \mbz, \mbx)$ being a simple hierarchical model (Eq.~1).

We first prove item (1). Suppose $p(\theta, \mbz, \mbx)$ is a simple hierarchical model.
Applying the CAVI rule (Lemma 3.1) to Eq. 1,
  \begin{align*}
    q(z_n \s \nu^*) & \propto  \exp \left \{ \mathbb E_{q(\theta \s \nu_0^*)} \left [ \mathbb E_{q(\mbz_{-n} \s \nu^*)} \left [ \log p({\theta}) + \sum_{j = 1}^n \log p(z_j \mid \theta) + \log p(x_j \mid z_j, \theta) \right ] \right ] \right \} \\
             & \propto  \exp \left \{ \mathbb E_{q(\theta \s \nu_0^*)} \left [ \mathbb E_{q(\mbz_{-n} \s \nu^*)} \left [ \log p(z_n \mid \theta) + \log p(x_n \mid z_n, \theta) \right ] \right ]  \right \}  \\
             & \propto \exp \left \{ \mathbb E_{q(\theta \s \nu_0^*)} \left [\log p(z_n \mid \theta) + \log p(x_n \mid z_n, \theta \right ] \right \}.
  \end{align*}
%
Then
\begin{equation}  \label{eq:q-simple-hier}
  q(z_n \s \nu^*) = k_{\mbx} (x_n) \int_\Theta q(\theta \s \nu^*_0(\mbx)) \log p(z_n \mid \theta) + \log p(x_n \mid z_n, \theta) \text d \theta,
\end{equation}
%
where $k_{\mbx} (x_n) = \left [\int_{\mathcal Z} \int_\Theta q(\theta \s \nu^*_0(\mbx)) \log p(z_n \mid \theta) + \log p(x_n \mid z_n, \theta) \text d \theta \text d z_n \right]^{-1}$ is a normalizing constant.
The R.H.S of \Cref{eq:q-simple-hier} defines an ideal inference function $f_{\mbx} (x_n)$, in the sense that, given $\mbx$, we have $x_n = x_m \implies f_{\mbx}(x_n) = f_{\mbx}(x_m)$.

Next we prove the converse, which is item (2) of Theorem~3.3.
Applying the CAVI rule to a standard latent variable model,
%
\begin{align}  \label{eq:cavi-zn}
  q(z_n \s \nu^*) & \propto \exp \left \{ \mathbb E_{q(\theta, \mbz_{-n} \s \boldsymbol \nu^*_{-n})} \log p(\theta, \mbz, \mbx) \right \} \nonumber \\
  & \propto \exp \left \{ \mathbb E_{q(\theta, \mbz_{-n} \s \boldsymbol \nu^*_{-n})} \log p(z_n \mid \mbz_{-n}, \theta) + \log p(x_n \mid z_n, \mbz_{-n}, \theta) + \sum_{i \neq n} \log p(x_i \mid z_n, \mbz_{-n}, \theta)  \right \}.
\end{align}
%
The last equation highlights all the terms in which $z_n$ appears.
Furthermore, we used the property of \textit{conditional independence} (Definition~3.2 (ii)) to break up the log likelihood $\log p(\mbx \mid \mbz, \theta)$ into a sum.

Suppose now that there exists a graph $\mathcal G$, such that for \textit{any} standard latent variable model supported by this graph, there exists an ideal inference function, that is $\nu^*_n = f_{\mbx} (x_n)$.
Because $q$ is parametric, we have that the R.H.S of \Cref{eq:cavi-zn} is also a (dataset dependent) function of $x_n$.
For this assumption to hold \textit{for any choice of distribution}, any contribution of $x_{i \neq n}$ that is not common to all the variational factors of $q(\mbz)$ must be absorbed into the normalizing constant and effectively vanish.
We will complete the proof by removing unique contributions of $x_i$ and severing offending edges in $\mathcal G$ (Figure~\ref{fig:graphG}).

\begin{figure}
      \centering
      \begin{tikzpicture}
    [
      Empty/.style={circle, draw=white!, fill=green!0, thick, minimum size=1mm},
      Round/.style={circle, draw=black!, fill=green!0, thick, minimum size=10mm},
    ]

    % Simple hierarchical
    \node[Round] (z_1) at (-1, 0){$z_{n - 1}$};
    \node[Round] (z_2) at (3, 0){$z_n$};
    \node[Round] (x_1) at (-1, -1.5){$x_{n - 1}$};
    \node[Round] (x_2) at (3, -1.5){$x_n$};
    \node[Round] (theta) at (1, -3) {$\theta$};

    \path[->, draw] (z_1) -- (x_1);
    \path[->, draw] (z_2) -- (x_2);
    \path[->, draw] (theta) -- (x_1);
    \path[->, draw] (theta) -- (x_2);
    \path[->, draw] (theta) -- (z_1);
    \path[->, draw] (theta) -- (z_2);
    
    \path[->, draw, dashed] (z_1) -- (z_2);
    \path[->, draw, dashed] (z_1) -- (x_2);
    \path[->, draw, dashed] (z_2) -- (x_1);

    \end{tikzpicture}
    \caption{
    \textit{Graphical representation of a standard latent variable model.
    If present, the dotted edges preclude the existence of an ideal inference function $f_{\mbx} (x_n) = \nu^*_n$ and the amortization gap cannot be closed.
    }}
    \label{fig:graphG}
  \end{figure}

The most obvious contribution of $x_i$ appears in the likelihood terms and is removed if and only if we exclude non-local dependence, that is for $i \neq n$, $p(x_i \mid z_n, \mbz_{-n}, \theta) = p(x_i \mid \mbz_{-n}, \theta)$.
%
Doing so for every $n$, we have
%
\begin{equation} \label{eq:nonlocal-independence}
  p(x_i \mid z_n, \mbz_{-n}, \theta) = p(x_i \mid z_i, \theta).
\end{equation}

\begin{remark}
  Here the assumption of \textit{local dependence} (Definition~3.2 (i)) is critical.
  Without it, we cannot exclude the possibility that $x_i$ does not depend on $z_i$, or any $z_j$'s other than $z_n$, and hence that $p(x_i \mid z_n, \mbz_{-n}, \theta) = p(x_i \mid z_n, \theta)$, $i \neq n$.
  Then an edge between $z_n$ and $x_i$ would not contradict the existence of an ideal inference function.
\end{remark}

Next, we have by assumption that $\nu^*_i = f_\mbx(x_i)$.
Then
%
\begin{equation}
  q(z_n \s \nu^*) \propto \exp \left \{ \int_{\Theta, {\bf \mathcal Z}_{-n}} q(\text d \theta \s \nu_0 (\mbx)) \prod_{i \neq n} q(\text d z_i \s f_\mbx(x_i)) \log p(z_n \mid \mbz_{-n}, \theta) + \log p(x_n \mid z_n, \theta) \right \}.
\end{equation}
%
The offending terms are now the variational factors $q(\text d z_i \s f_\mbx(x_i))$ in the integral.
To remove them, we must get rid of any term that couples $z_n$ and $z_i$, and so $z_n$ must be a priori independent of $z_i$, that is
%
\begin{equation} \label{eq:apriori-independence}
  p(z_n \mid \mbz_{-n}, \theta) = p(z_n \mid \theta).
\end{equation} 
%
A standard latent variable model that verifies \Cref{eq:nonlocal-independence} and \Cref{eq:apriori-independence} must also verify Eq.~1 and is therefore a simple hierarchical model. \qed

\subsection{Example of a latent variable model, which is not a simple hierarchical model and admits an ideal inference function}

The statement of Theorem~3.4, item (ii) is carefully written for all distributions supported on a graph.
To see why a simple ``if and only if'' version of item (i) is not true, consider a dense hierarchical model, with edges between all elements of $\mbx$ and $\mbz$.
If we a choose a likelihood which is symmetric in ${\bf z}$, e.g. $p(x_n \mid \mbz, \theta) = p(x_n \mid \sum_n z_n, \theta)$, then there exists a (constant) ideal inference function and moreover, all factors $q(z_n \s \nu^*_n)$ are identical.

This case is of course trivial: with such a symmetry, the notion of a local latent variable is unjustified.
To our knowledge, all examples of models, which are not simple hierarchical models and still admit an ideal inference function, rely on a similar trivialities.
These however constitute edge cases we must be mindful of when writing formal statements.

\subsection{Analytical results for the linear probabilistic model}

We prove Proposition~3.6, which provides an exact expression for the mean and variance of $q(z_n \s \nu^*)$, the optimal solution returned by F-VI when applied to the linear generative model.
In the model of interest, $\theta$ is a scalar random variable, and we introduce the fixed standard deviations, $\tau \in \mathbb R$ and $\sigma \in \mathbb R$.
Next
\begin{equation}
    p(\theta) \propto 1; \ \ p(z_n) = \mathcal N(0, 1); \ \ p(x_n) = \mathcal N(\theta + \tau z_n, \sigma).
\end{equation}
%
Since the posterior distribution $p(\theta, {\bf z} \mid {\bf x})$ is normal, $q(z_n \s \nu^*)$ can be worked out analytically \citep[e.g][]{Turner:2011, Margossian:2023}.
Specifically,
%
\begin{equation}
    q(z_n \s \nu_n^*) = \mathcal N \left(\mu_n, \frac{1}{[\Sigma^{-1}]_{nn}} \right),
\end{equation}
%
where $\mu_n$ is the correct posterior mean for $z_n$ and $\Sigma$ is the correct posterior covariance matrix.
Note that F-VI always underestimates the posterior marginal variance unless $\Sigma$ is diagonal \citep[][Theorem 3.1]{Margossian:2023}.
%
It remains to find an analytical expression for the posterior distribution.
%
\begin{lemma}
  The marginal posterior distribution is given by
  \begin{equation}
      p(z_n \mid {\bf x}) = \mathcal N \left (\frac{\tau}{\sigma^2 + \tau^2} (x_n - \bar x), s \right),
  \end{equation}
  %
  for some $s$, constant with respect to ${\bf x}$.
\end{lemma}
\begin{proof}
  From Bayes' rule
  \begin{eqnarray}  \label{eq:joint_quadratic}
    \log p({\bf z}, \theta \mid {\bf x}) & = & k - \frac{1}{2} \sum_{n = 1}^N z_n^2 - \frac{1}{2 \sigma^2} \sum_{n = 1}^N (x_n - \theta - \tau z_n)^2 \nonumber \\
    & = & k - \frac{1}{2} \sum_{n = 1}^N z_n^2 - \frac{1}{2 \sigma^2} \sum_{n = 1}^N \theta^2 + (x - \tau z_n)^2 - 2 \theta (x_n - \tau z_n) \nonumber \\
    & = & k - \frac{1}{2} \sum_{n = 1}^N z_n^2 - \frac{1}{2 \sigma^2} \left ( n \theta^2 + \sum_{n = 1}^N (x_n - \tau z_n)^2 - 2 \theta \sum_{n = 1}^N (x_n - \tau z_n) \right),
  \end{eqnarray}
  %
  where $k$ is a constant with respect to ${\bf z}$ and $\theta$.
  Moving forward, we overload the notation for $k$ to designate any such constant.
  %
  As expected, Eq.~\ref{eq:joint_quadratic} is quadratic in $\theta$ and ${\bf z}$.

  \begin{remark}
  At this point, the proof may take two directions: in one, we work out the precision matrix, $\Phi$ (i.e. the inverse covariance matrix $\Sigma$) for $p({\bf z}, \theta \mid {\bf x})$ and invert it to obtain the posterior mean for each $z_n$.
  Constructing $\Phi$ is straightforward and necessary to show the covariance of $q(z_n \s \nu^*_n)$ is constant with respect to ${\bf x}$.
  However, inverting $\Phi$ requires recursively applying the Sherman-Morrison formula three times, which is algebraically tedious.
  The other direction is to marginalize out $\theta$.
  We can then construct the precision matrix $\Psi$ for $p({\bf z} \mid {\bf x})$, which only requires a single application of the Sherman-Morrison formula to invert.
  We opt for the second direction, noting both options are rather involved.
  \end{remark}
  
  To marginalize out $\theta$, we complete the square and perform a Gaussian integral,
  %
  \begin{eqnarray}
    \log p({\bf z}, \theta \mid {\bf x})& = &
    k - \frac{1}{2} \sum_{n = 1}^N z_n^2 - \frac{n}{2 \sigma^2} \left [ \theta^2 + \frac{1}{n} \sum_{n = 1}^N (x_n - \tau z_n)^2 - 2 \theta \sum_{n = 1}^N (x_n - \tau z_n) \right . \nonumber \\
    & & + \left . \left (\frac{1}{n} \sum_{n = 1}^N (x_n - \tau z_n) \right)^2 - \left (\frac{1}{n} \sum_{n = 1}^N (x_n - \tau z_n) \right)^2 \right] \nonumber \\
    & = &  k - \frac{1}{2} \sum_{n = 1}^N z_n^2 - \frac{n}{2 \sigma^2} \left [ \left (\theta - \frac{1}{n} \sum_{n = 1}^N (x_n - \tau z_n) \right)^2 + \frac{1}{n} \sum_{n = 1}^N (x_n - \tau z_n)^2 \right . \nonumber \\ 
    & & - \left . \left (\frac{1}{n} \sum_{n = 1}^N (x_n - \tau z_n) \right)^2 \right ]
  \end{eqnarray}
  %
  Then
  \begin{eqnarray}
    \log p({\bf z} \mid {\bf x}) = k - \frac{1}{2} \sum_{n = 1}^N z_n^2 - \frac{1}{2 \sigma^2} \left [ \sum_{n = 1}^N (x_n - \tau z_n)^2 - \frac{1}{n} \left ( \sum_{n = 1}^N (x_n - \tau z_n) \right)^2 \right].
  \end{eqnarray}
  %
  % At this point, we may recognize the quadratic form in the exponential, expected from a Gaussian distribution.
  % The squared sum term couples the elements of ${\bf z}$, meaning the $z_n$'s are not \textit{a posteriori} independent.
  Expanding the square,
  \begin{equation}
    \left ( \sum_{n = 1}^N (x_n - \tau z_n) \right)^2 = \sum_{n = 1}^N (x_n - \tau z_n)^2 + 2 \sum_{j < n} (x_n - \tau z_n) (x_j - \tau z_j).
  \end{equation}
  %
  Plugging this in and factoring out $\tau$, we get
  \begin{equation}
      \log p({\bf z} \mid {\bf x}) = k - \frac{1}{2} \sum_{n = 1}^N z_n^2 - \frac{\tau^2}{2 \sigma^2} \left [ \sum_{n = 1}^N \left (1 - \frac{1}{n} \right) \left (\frac{x_n}{\tau} - z_n \right)^2 - \frac{2}{n} \sum_{j < n} \left( \frac{x_n}{\tau} - z_n \right) \left (\frac{x_j}{\tau} - z_j \right) \right ].
  \end{equation}
  %
  Now the standard expression for a multivariate Gaussian is
  %
  \begin{equation}
    \log p({\bf z} \mid {\bf x}) = k - \frac{1}{2} ({\bf z} - \boldsymbol \mu)^T \Psi ({\bf z} - \boldsymbol \mu) = k -\frac{1}{2} \left (\sum_{n = 1}^N \Psi_{nn} (z_n - \mu_n)^2 + 2 \sum_{j < n} \Psi_{jn} (z_n - \mu_n)(z_j - \mu_j) \right),
  \end{equation}
  %
  where $\boldsymbol \mu$ is the mean and $\Psi$ the precision matrix.
  We solve for the mean and precision matrix by matching the coefficients in the above two expressions for $z_n$, $z_n z_j$, and $z_n^2$, which respectively produce the following equations:
  %
  \begin{align}
    \sum_{j = 1}^N \Psi_{nj} \mu_j & = \frac{\tau}{\sigma^2} (x_n - \bar x) \label{eq:posterior_mean} \\
    \Psi_{nj} & = - \frac{\tau^2}{n \sigma^2}, \ \ \forall n \neq j  \\
    \Psi_{nn} & = 1 + \frac{\tau^2}{\sigma^2} \left (1 - \frac{1}{N} \right).
  \end{align}
  %
  This immediately gives us the precision matrix.
  Eq.~\ref{eq:posterior_mean} may be rewritten in matrix form as
  %
  \begin{equation}
      \boldsymbol \mu = \frac{\tau}{\sigma^2} \Psi^{-1} [{\bf x} - \bar x {\bf 1}],
  \end{equation}
  %
  where ${\bf 1}$ is the $N$-vector of 1's.
  %
  Let $\alpha = \Psi_{nj}$, for any $n \neq j$, and $\beta = \Psi_{nn} - \alpha$.
  Then
  \begin{equation}
      \Psi = \beta I + \alpha {\bf 1 1}^T,
  \end{equation}
  %
  Applying the Sherman-Morrison formula, we obtain the covariance matrix,
  \begin{eqnarray}
      \Psi^{-1} & = & (\beta I + \alpha {\bf 1 1}^T)^{-1} \nonumber \\
        & = & \beta^{-1} I - \frac{\beta^{-1} I \alpha {\bf 1 1}^T \beta^{-1} I}{1 + \alpha {\bf 1}^T \beta^{-1} I {\bf 1}} \nonumber \\
        & = & \beta^{-1} I - \frac{\alpha \beta^{-1}}{\beta + N \alpha} {\bf 11}^T.
  \end{eqnarray}
  %
  Notice that $\Psi^{-1}$ does not depend on ${\bf x}$ and that it's diagonal elements are all equal.
  Moreover $(\Psi^{-1})_{nn}$ gives us the constant, $s$.
  %
  Next let
  \begin{equation}
      a = \beta^{-1} \frac{\tau}{\sigma^2}; \ \ \ \ b = - \frac{\alpha \beta^{-1}}{\beta + N \alpha} \frac{\tau}{\sigma^2}.
  \end{equation}
  %
  Then $\boldsymbol \mu = (a I + b {\bf 1 1}^T) [{\bf x} - \bar x {\bf 1 1}^T]$ and moreover
  \begin{eqnarray*}
      \mu_n & = & a (x_n - \bar x) + b \sum_{j = 1}^N x_j - \bar x \\
      & = & a (x_n - \bar x) \\
      & = & \frac{\tau}{\sigma^2}\left (\frac{\tau^2 + \sigma^2}{\sigma^2} \right)^{-1} (x_n - \bar x) \\
      & = & \frac{\tau}{\sigma^2 + \tau^2}(x_n - \bar x),
  \end{eqnarray*}
  as desired.
  
\end{proof}

To complete the proof of Proposition~3.4, we need to show that the variances of $q(z_n \s \nu^*)$ is constant with respect to ${\bf x}$; that they are equal for each $z_n$ follows from the symmetry of the problem.
We already constructed the precision matrix $\Psi$ for $p({\bf z} \mid {\bf x})$, but we actually need to study the full precision matrix $\Phi$ of $p(\theta, {\bf z} \mid {\bf x})$.
We use the index $0$ to denote the columns (or rows) corresponding to $\theta$.

\begin{lemma}
    The posterior precision matrix $\Phi$ of $p(\theta, {\bf z} \mid {\bf x})$ verfies
    \begin{equation}
        \Phi_{00} = \frac{N}{\sigma^2}; \ \
        \Phi_{0j} = \frac{\tau}{2 \sigma^2} \ \mathrm{if} \ j > 0; \ \
        \Phi_{nn} = 1 + \frac{\tau^2}{\sigma^2} \ \mathrm{if} \ i > 0; \ \
        \Phi_{nj} = 0, \ \mathrm{if} \ n \neq j.
    \end{equation}
    Crucially, $\Phi$ is constant with respect to ${\bf x}$.
\end{lemma}
%
\begin{proof}
    Consider Eq.~\ref{eq:joint_quadratic}, rewritten here for convenience,
    \begin{equation*}
        \log p({\bf z}, \theta \mid {\bf x}) = k - \frac{1}{2} \sum_{n = 1}^N z_n^2 - \frac{1}{2 \sigma^2} \left ( N \theta^2 + \sum_{n = 1}^N (x_n - \tau z_n)^2 - 2 \theta \sum_{n = 1}^N (x_n - \tau z_n) \right).
    \end{equation*}
    The standard Gaussian form is
    \begin{eqnarray} \label{eq:Gaussian}
      \log p({\bf z}, \theta \mid {\bf x}) & = & k - \frac{1}{2} \left [\Phi_{00} (\theta - \nu)^2 + \sum_{n = 1}^N \Phi_{nn} (z_n - \mu_n)^2 \right.  \nonumber \\ 
      & &\left . + 2 \left ( \sum_{j = 1}^N \Phi_{0j} (\theta - \nu)(z_j - \mu_j) + \sum_{j < n} \Phi_{nj} (z_n - \mu_n)(z_j - \mu_j) \right) \right].
    \end{eqnarray}
    %
    Matching coefficients for $\theta^2$, $\theta z_j$, $z_n z_j$ and $z_n^2$, we obtain respectively
        \begin{equation*}
        \Phi_{00} = \frac{N}{\sigma^2}; \ \
        \Phi_{0j} = \frac{\tau}{2 \sigma^2} \ \mathrm{if} \ j > 0; \ \
        \Phi_{nn} = 1 + \frac{\tau^2}{\sigma^2} \ \mathrm{if} \ n > 0; \ \
        \Phi_{nj} = 0, \ \mathrm{if} \ n \neq j.
    \end{equation*}
\end{proof}

    The variance of $q(z_n \s \nu^*)$ is obtained by inverting the diagonal elements of $\Phi$.
    By symmetry, \mbox{$\text{Var}_{q^*}(z_n) = \xi \ \ \forall n$}, where $\xi$ is a constant which does not depend on ${\bf x}$.
    This completes the proof of Proposition~3.4. \qed

  \subsection{Non-existence of an ideal inference function for hidden Markov models}

%  Before stating the proof, let us quickly examine why the proof used for the simple hierarchical model does not work here. This time when applying the CAVI rule (Lemma 3.1) to the hidden Markov model, we get
%    {\small
%    \begin{equation}
%        q(z_n \s \nu^*_n) \propto \exp \left \{ \mathbb E_{q(\theta \s \nu_0^*)} \log p(x_n \mid z_n, \theta) + \mathbb E_{q(z_{n - 1} \s \nu^*)} \log p(z_n \mid z_{n - 1}) + \mathbb E_{q(z_{n + 1} \s \nu^*)} \log p(z_{n + 1} \mid z_n) \right\}.
%    \end{equation}
%    }
%    %
%    Because the prior on $\mbz$ does not factorize, we pick up two additional terms through $p(z_n \mid z_{n - 1})$ and $p(z_{n + 1} \mid z_n)$, which prevents us from finishing the proof as we did in Section~\ref{app:simple-hier}. Nonetheless we should be mindful that the above does not immediately give us counter-example in which a learnable inference function does not exist.
%
%    Now to the proof. 
    To prove Proposition~3.8, we construct an example for which the optimal F-VI solution, using a factorized Gaussian approximation, can be written in a nearly closed form, and show that the optimal variational factors $\nu^*_n$ take different values even when all the values of $\mbx$ are equal.
    Then for any strict subset ${\bf w}_n \in \mbx$, we have ${\bf w}_n = {\bf w}_m$ but $\nu^*_n \neq \nu^*_m$.
    This provides our counter-example.

    Consider the model
    \begin{equation} \label{eq:hmm-simple}
      p(z_0) \propto 1 \s p(z_n \mid z_{n - 1}) = \mathcal N(z_{n - 1}, 1) \s p(x_n \mid z_n) = \mathcal N(z_n, 1),
    \end{equation}
    %
    where $\theta$ is held fixed, say to a point estimate $\hat \theta$, and ignored for the rest of this analysis.
    Applying Bayes' rule and expanding
    \begin{eqnarray*}
      \log p(\mbz \mid \mbx) & = & k - \frac{1}{2} \sum_{n = 1}^N (z_n - z_{n - 1})^2 + (x_n - z_n)^2 \\
      & = & - \frac{1}{2} \sum_{n = 1}^N 2 z_n^2 + z_{n - 1}^2 - 2 x_n z_n - 2 z_n z_{n - 1},
    \end{eqnarray*}
    %
    which is a quadratic form in $\mbz$ and hence a Gaussian.
    Matching the coefficients for $z_n$, $z_n z_j$ and $z_n^2$ to the standard expression for a multivariate Gaussian (Eq.~\ref{eq:Gaussian}), we get
    %
    \begin{eqnarray}
        \sum_{j = 1}^N \Psi_{nj} \mu_j & = - 2 x_n  \\
        \Psi_{n j} & = - 2 & \text{if} \ j = n - 1 \ \text{or} \ j = n + 1 \\
        \Psi_{nn} & = 3 & \text{if} \ n \ge 1 \\
        \Psi_{00} & = 1.
    \end{eqnarray}
    %
    All non-specified elements of $\Psi$ go to 0.
    Moreover the precision matrix $\Psi$ is tri-diagonal.
    The posterior mean solves the linear problem,
    \begin{equation}
      \boldsymbol \mu = - 2 \Psi^{-1} \mbx.
    \end{equation}
    %
    Since the variational family and the target are both Gaussian, the optimal variational mean is simply the posterior mean and $\nu^* = \boldsymbol \mu$. Even though the elements of $\mbx$ are all equal, it is in general not the case that the elements of $\nu^*$ are constant.
    %
    To see this explicitly, we take $N = 100$ and $x_1 = x_2 = \cdots = x_N = 1$, and find that the elements of $\nu^*$ are indeed distinct (Figure~\ref{fig:nu-hmm}).
    This shows that there exists a hidden Markov model and a realization of the data ${\bf x}$ such that no learnable inference function exists. \qed 

    \begin{figure}
      \centering
      \includegraphics[width=5in]{figures/nu_hmm.pdf}
      \caption{\textit{Optimal variational means when using a Gaussian F-VI on a hidden Markov model (\Cref{eq:hmm-simple}). Even though the elements of ${\bf x}$ are all equal, the optimal variational means take on different values and so no inference function $f_\phi:\bf w_n \to \nu^*_n$ can be constructed, for any subset $\bf w_n \in \bf x$}.}
      \label{fig:nu-hmm}
    \end{figure}

    \section{ADDITIONAL EXPERIMENTAL RESULTS}

    {\bf Hardware.} All experiments are conducted in \texttt{Python} 3.9.15 with \texttt{PyTorch} 1.13.1 and \texttt{CUDA} 12.0 using an NVIDIA RTX A6000 GPU.

    {\bf Reconstruction error on test set for Bayesian neural network.} We consider the reconstruction error on a test set of 10,000 images (\Cref{fig:mse-test}).
    The reconstructed image is obtained by (i) computing $q(z' \mid x')$ using the inference function $f_\phi$ and (ii) feeding $\mathbb E_q(z' \mid x')$ into the likelihood neural network $\Omega$ (in the VAE context, the ``decoder'') to obtain $\hat x'$.
    $\Omega$ is evaluated at the Bayes estimator $\hat \theta = \mathbb E_q(\theta \mid \mbx)$.
    F-VI provides no automatic way of doing step (i) (one would need to learn $q(z' \s \nu')$ by running F-VI from scratch), and so we do not evaluate it on the test set.
    Overall, we find the model generalizes well, and the test error is very close to the training error.

    \begin{figure}
        \centering
        \includegraphics[width=3in]{figures/mse_test_BNN_1797.pdf}
        \caption{\textit{Reconstruction MSE on a test set.}}
        \label{fig:mse-test}
    \end{figure}

    % All experiments are conducted in \texttt{Python} 3.9.15 with \texttt{PyTorch} 1.13.1 and \texttt{CUDA} 12.0 using an NVIDIA RTX A6000 GPU.

\bibliography{ref.bib}